\newcounter{unnumberedsection}
\newcommand{\unnumberedsection}[1]{%
    \section*{#1}%
    \addcontentsline{toc}{section}{#1}%
    \refstepcounter{unnumberedsection}%
    \def\cref@currentlabel{[unnumberedsection][\arabic{unnumberedsection}][]#1}%
    \def\@currentlabelname{#1}%
}
\crefname{section}{Sec.}{Secs.}
\Crefname{section}{Section}{Sections}
\crefname{table}{Tab.}{Tabs.}
\Crefname{table}{Table}{Tables}
\crefname{figure}{Fig.}{Figs.}
\Crefname{figure}{Figure}{Figures}
\crefname{equation}{Eq.}{Eqs.}
\Crefname{equation}{Equation}{Equations}
\crefname{theorem}{Thm.}{Thms.}
\Crefname{theorem}{Theorem}{Theorems}
\crefname{proposition}{Prop}{Props.}
\Crefname{proposition}{Proposition}{Propositions}
\crefname{lemma}{Lem.}{Lems.}
\Crefname{lemma}{Lemma}{Lemmas}
\crefname{remark}{Rem.}{Rems.}
\Crefname{remark}{Remark}{Remarks}
\crefname{corollary}{Cor.}{Cors.}
\Crefname{corollary}{Corollary}{Corollaries}
\crefname{algorithm}{Alg.}{Algs.}
\Crefname{algorithm}{Algorithm}{Algorithms}
\crefname{unnumberedsection}{}{}
\Crefname{unnumberedsection}{}{}
\theoremstyle{plain}
\newtheorem{theorem}{Theorem}[section]
\newtheorem{proposition}[theorem]{Proposition}
\theoremstyle{definition}
\theoremstyle{remark}
\newtheorem{remark}[theorem]{Remark}
\newcommand{\method}{\texttt{SMaRt}\xspace}
\newcommand{\tocite}[1]{{\color{red} [TOCITE]}}
\icmltitlerunning{SMaRt: Improving GANs with Score Matching Regularity}
\begin{document}

\twocolumn[
\icmltitle{SMaRt: Improving GANs with Score Matching Regularity}



\icmlsetsymbol{equal}{*}

\begin{icmlauthorlist}
\icmlauthor{Mengfei Xia}{thu,bnrist}
\icmlauthor{Yujun Shen}{ant}
\icmlauthor{Ceyuan Yang}{ailab}
\icmlauthor{Ran Yi}{sjtu} 
\icmlauthor{Wenping Wang}{texam}
\icmlauthor{Yong-Jin Liu$^\dag$}{thu}
\end{icmlauthorlist}

\icmlaffiliation{thu}{Tsinghua University}
\icmlaffiliation{bnrist}{BNRist}
\icmlaffiliation{ant}{Ant Group}
\icmlaffiliation{ailab}{Shanghai AI Laboratory}
\icmlaffiliation{sjtu}{Shanghai Jiao Tong University}
\icmlaffiliation{texam}{Texas A\&M University}

\icmlcorrespondingauthor{Yong-Jin Liu}{liuyongjin@tsinghua.edu.cn}

\icmlkeywords{Machine Learning, ICML}

\vskip 0.3in
]



\printAffiliationsAndNotice{}  

\begin{abstract}

Generative adversarial networks (GANs) usually struggle in learning from highly diverse data, whose underlying manifold is complex.
In this work, we revisit the mathematical foundations of GANs, and theoretically reveal that the native adversarial loss for GAN training is insufficient to fix the problem of \textit{subsets with positive Lebesgue measure of the generated data manifold lying out of the real data manifold}.
Instead, we find that score matching serves as a promising solution to this issue thanks to its capability of persistently pushing the generated data points towards the real data manifold.
We thereby propose to improve the optimization of GANs with score matching regularity (\method).
Regarding the empirical evidences, we first design a toy example to show that training GANs by the aid of a ground-truth score function can help reproduce the real data distribution more accurately, and then confirm that our approach can consistently boost the synthesis performance of various state-of-the-art GANs on real-world datasets with pre-trained diffusion models acting as the approximate score function.
For instance, when training Aurora on the ImageNet $64\times64$ dataset, we manage to improve FID from 8.87 to 7.11, on par with the performance of one-step consistency model.
Code is available at \href{https://github.com/thuxmf/SMaRt}{https://github.com/thuxmf/SMaRt}.

\end{abstract}

\section{Introduction}\label{sec:intro}

During the last period, deep generative models have made significant improvements in a variety of domains, such as data generation~\cite{Karras2019AnalyzingAI,Karras2021AliasFreeGA,ho2020denoising,song2020score,dhariwal2021diffusion,Karras2022edm} and image editing~\cite{Shen2019InterpretingTL,Shen2020ClosedFormFO,zhu2022linkgan,meng2022sdedit,couairon2023diffedit}.
It is well recognized that, recent generative models, such as DALL$\cdot$E 2~\cite{Ramesh2022HierarchicalTI}, Stable Diffusion~\cite{Rombach2021HighResolutionIS}, GigaGAN~\cite{Kang2023ScalingUG}, and Aurora~\cite{zhu2023aurora}, have achieved unprecedented capability improvement of high-resolution image generation, among which, diffusion probabilistic models (DPMs) are the most prominent.
DPMs leverage the diffusion and denoising processes.
Their intrinsic intricate knowledge of data distribution and strong capability to scale up, make DPMs the most successful and potential options for generative modeling.
The other paradigm now dominant, generative adversarial networks (GANs)~\cite{goodfellow2014gan,Brock2018LargeSG}, introduce an implicit modeling.
Despite enabling expeditious generation, GANs are usually criticized for unsatisfactory visual quality and limited diversity when compared with DPMs, making GANs seem to be falling from grace on image generation tasks.
However, GAN remains a worthy tool considering its good performance on single-domain datasets (\textit{e.g.}, human faces)~\cite{Karras2019AnalyzingAI,Karras2021AliasFreeGA} and its interpretable latent space~\cite{Shen2019InterpretingTL,zhu2022linkgan}.

\definecolor{myred}{RGB}{148,44,59}
\definecolor{myblue}{RGB}{128,127,222}

\begin{figure}[t]
\centering
\includegraphics[width=0.8\linewidth]{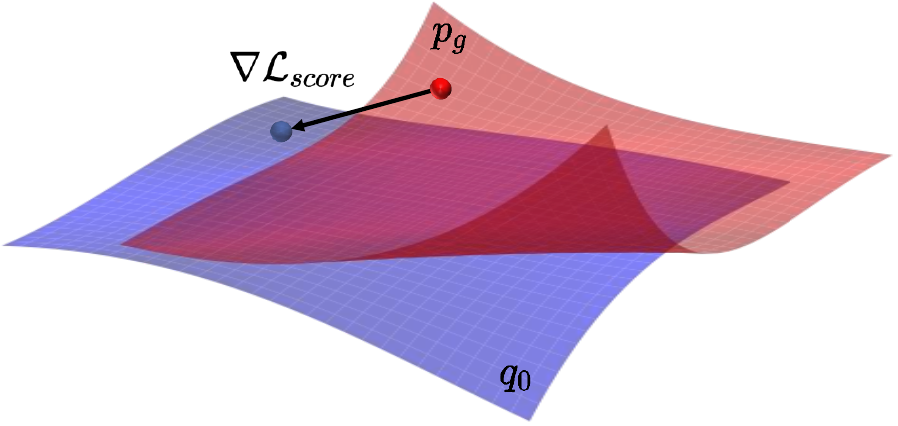}
\vspace{-5pt}
\caption{
    \textbf{Motivation scheme} of \method.
    \textbf{\textcolor{myred}{Red}} and \textbf{\textcolor{myblue}{blue}} surfaces denote the generated and real data manifolds, respectively.
    The positive-Lebesgue-measure subset of out-of-manifold generated samples leads to non-optimal constant generator loss, annihilating the gradient for generator.
    However, the proposed score matching regularity ($\mathcal L_{score}$ in \cref{eq:diffusion_loss}) provides complementary guidance, urging such a subset to move towards the real data manifold.
    In this case, generator loss regains to exert effective guidance aiding the generator distribution to converge to the real distribution.
}
\label{fig:motivation}
\vspace{-8pt}
\end{figure}

In this work, we dig into the mathematical foundations of GANs and reveal the necessary and sufficient conditions of optimality of generator loss.
We argue in \Cref{thm:nongrad} that positive-Lebesgue-measure difference sets of generated data manifold over real data manifold lead to constant but non-optimal generator loss, which annihilates the gradient and cancels effective guidance.
However, such non-optimality largely harms the synthesis performance, demonstrated in \Cref{thm:mode_collapse}.
More seriously, the gradient vanishing occurs frequently in practice.
Note that, real and generated data can be referred to as low-dimensional manifolds embedded in the high-dimensional pixel space, leading to the probability of transversal intersection or non-intersection equaling to 1~\cite{arjovsky2017principled}.
This indicates that the difference set of generated data over real data manifold almost always has positive Lebesgue measure.

Based on the above analyses, we are devoted to designing an effective methodology to tackle this obstacle.
We propose a universal solution, \textbf{S}core \textbf{Ma}tching \textbf{R}egulari\textbf{t}y, namely \method, leveraging score matching to facilitate GAN training.
The theoretical foundation is that, score matching pushes out-of-data-manifold generated samples towards the data manifold throughout, summarized in \Cref{thm:score_matching} and demonstrated in \cref{fig:motivation}.
Revealing this rigorous mathematical foundation, \method persistently provides gradient for generator, enforcing the generator distribution to support only on the data manifold.
Afterwards, the generator loss could regain the correct and effective guidance towards data distribution.
Our motivation is intuitive -- GAN loss focuses only on the \textit{generated and real data manifold}, while the score matching on the \textit{whole space} manages to serve as a regularity to facilitate GAN training.
By doing so, we succeed on alleviating the gradient vanishing issue.
Hence, our work offers a new perspective on improving GAN performance.
Given the rapid improvement in seminal works, the editing on a well-studied latent space~\cite{Shen2019InterpretingTL,Shen2020ClosedFormFO,zhu2022linkgan}, and the strong compatibility with the involvement of 3D-aware image synthesis~\cite{Chan2022,piGAN2021,gao2022get3d,gu2022stylenerf,shi2023pof3d,shi2022improving}, we believe that our work could encourage more studies in the field of visual content generation.

\vspace{9pt}

\section{Related work}\label{sec:related}

\noindent\textbf{GANs and improved GAN training.}
GANs~\cite{goodfellow2014gan} have become one of the main paradigms of generative models for high-quality image generation.
Thanks to the rapidly and significantly improvement on the sampling quality~\cite{Karras2017ProgressiveGO,Karras2018ASG, Karras2019AnalyzingAI,Karras2021AliasFreeGA,Kang2023ScalingUG,zhu2023aurora}, GANs are introduced to various downstream applications, including text-to-image synthesis~\cite{Reed2016GenerativeAT,Kang2023ScalingUG,zhu2023aurora}, and image-to-image translation~\cite{Isola2016ImagetoImageTW,Rai2018UnpairedIT,Huang2018MultimodalUI,Lee2018DRITDI,Park2019SemanticIS,Park2020ContrastiveLF}.
In particular, style-based GANs~\cite{Karras2018ASG,Karras2019AnalyzingAI} have shown impressive ability on single-domain datasets (\textit{e.g.}, human faces) and interpretable latent space~\cite{Shen2019InterpretingTL,zhu2022linkgan}.
%
However, GANs severely suffer from the famous ``gradient vanishing''~\cite{arjovsky2017principled} dilemma, restricting further development of synthesis quality and diversity.
To this end, WGAN~\cite{arjovsky2017wgan} replaces the native KL-divergence with Wasserstein distance as the GAN loss, inspired by optimal transportation.
Besides, progressive training has been widely studied in GAN literature~\cite{piGAN2021,Karras2017ProgressiveGO,Karras2018ASG}, thanks to its efficacy in improving training stability and efficiency.
Theoretically, \method can be considered as a regularity compatible with existing GAN training strategies, effectively addressing the GAN training obstacles.

\noindent\textbf{DPMs and efficient DPM sampling}.
DPMs~\cite{sohl2015deep,ho2020denoising,song2020score} introduce a novel scheme of generative model, trained by optimizing the variational lower bound.
Benefiting from this breakthrough, DPMs achieve high generation fidelity, and even beat GANs on image generation.
Therefore, various works followed with promising results, including video synthesis~\cite{ho2022video}, conditional generation~\cite{choi2021ilvr,lhhuang2023composer}, and text-to-image synthesis~\cite{Ramesh2022HierarchicalTI,Rombach2021HighResolutionIS,Saharia2022PhotorealisticTD}.
However, DPM employs an iterative refinement via thousands of denoising steps, suffers from a slow inference speed.
Efficient DPM sampling explores shorter denoising trajectories rather than the complete reverse process, while ensuring the synthesis performance.
One representative category introduces knowledge distillation~\citep{SalimansH22,luhman2021knowledge,song2023consistency,luo2023diffinstruct}.
Despite respectable performance with one step~\citep{song2023consistency,luo2023diffinstruct}, they require expensive distillation stages, leading to poor applicability.

\section{Method}\label{sec:method}

\subsection{Background on GANs and DPMs}\label{subsec:3.1}

Denote by $\mathbf x$ the training data with an unknown distribution $q_0(\mathbf x)$.
GANs involve a generator $G$ and a discriminator $D$, to map random noise $\mathbf z$ to sample and discriminate real or generated samples, respectively~\cite{goodfellow2014gan}.
Formally, GANs endeavor to achieve Nash equilibrium via the following two losses:
\vspace{-2pt}
\begin{align}
\mathcal L_G&=-\mathbb E_{\mathbf z}[\log D(G(\mathbf z))], \label{eq:g_loss} \\
\mathcal L_D&=-\mathbb E_{\mathbf x}[\log D(\mathbf x)]-\mathbb E_{\mathbf z}[\log(1-D(G(\mathbf z)))], \label{eq:d_loss}
\end{align}
where $\mathbf z$ is random noise embedded in the latent space.

\begin{figure*}[t]
\centering
\includegraphics[width=1.0\textwidth]{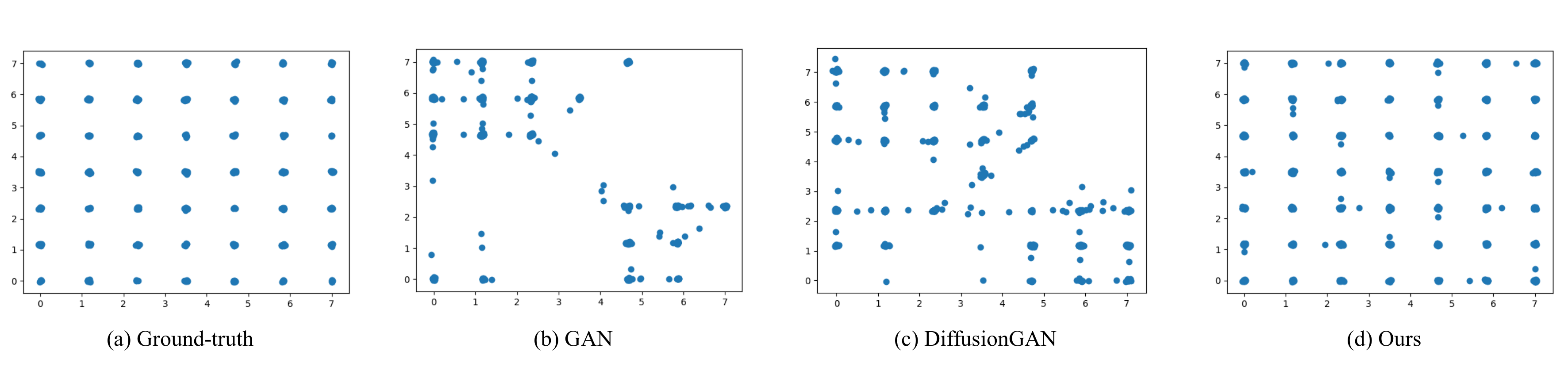}
\vspace{-20pt}
\caption{
    \textbf{Visualization} of discrete distribution example.
    The demonstrated toy data is simulated by a mixture of 49 2-dimensional Gaussian distributions with extremely low variance.
    Each data sample is a 2-dimensional feature tensor.
    Following~\citet{wang2022diffusiongan}, we train a small GAN model, whose generator and discriminator are both parameterized by MLPs, with two 128-unit hidden layers and Leaky ReLU activation functions.
    We show (a) the true data samples, (b) the generated samples from vanilla GAN, (c) the generated samples from DiffusionGAN~\cite{wang2022diffusiongan}, and (d) the generated samples from our \method.
    As is demonstrated, vanilla GAN and DiffusionGAN \textit{fail} to address all samples onto the data manifold discretely, \textit{i.e.}, the generated samples tend to be continuous and \textit{out of the grids}.
    As a comparison, our \method can successfully synthesize discrete samples, whose distribution coincides with the ground-truth.
}
\label{fig:toy_model}
\end{figure*}

On the other hand, DPMs~\cite{sohl2015deep,song2020score,ho2020denoising} define a forward diffusion process $\{\mathbf x_t\}_{t\in[0,T]}, T>0$ by gradually corrupting the initial information of $\mathbf x$ with Gaussian noise, such that for any timestep $0<t\leqslant T$, we have the transition distribution:
\begin{align}\label{eq:tran}
q_{0|t}(\mathbf x_t|\mathbf x)=\mathcal N(\mathbf x_t;\alpha_t\mathbf x,\sigma_t^2\mathbf I),
\end{align}
where $\alpha_t,\sigma_t>0$ are differentiable functions of $t$.
The selection of $\alpha_t,\sigma_t$ is referred to as the \textit{noise schedule}.
Denote by $q_t(\mathbf x_t)$ the marginal distribution of $\mathbf x_t$, DPM fits $\mathcal N(\mathbf x_T;\mathbf 0,\sigma^2\mathbf I)$ with $q_T(\mathbf x_T)$ for some $\sigma>0$, and the signal-to-noise-ratio (SNR) $\alpha_t^2/\sigma_t^2$ is strictly decreasing w.r.t. $t$~\cite{kingma2021variational}.
DPMs utilize the noise prediction model $\boldsymbol \epsilon_\theta(\mathbf x_t,t)$, to approximate the score function from $\mathbf x_t$, where the optimal parameter $\theta^*$ can be optimized by the objective below through denoising score matching:
\begin{align}\label{eq:dpm_loss}
\theta^*=\mathop{\arg\min}_{\theta}\mathbb E_{\mathbf x,\boldsymbol\epsilon,t}\left[\|\boldsymbol \epsilon_\theta(\mathbf x_t,t) - \boldsymbol\epsilon\|_2^2\right],
\end{align}
where $\boldsymbol\epsilon\sim\mathcal N(\mathbf 0,\mathbf I)$, $\mathbf x_t=\alpha_t\mathbf x+\sigma_t\boldsymbol\epsilon$, and $t\sim\mathcal U[0,T]$.

\subsection{Revisiting GAN Training}\label{subsec:3.2}

We first delve into the theory of GANs, trying to analyze the dilemma encountered by GANs with deep findings.
Recall that when GANs achieve the Nash equilibrium, we have the two equalities about $q_0$ and generator distribution $p_g$:
\begin{align}
D(\mathbf x)=\frac{q_0(\mathbf x)}{q_0(\mathbf x)+p_g(\mathbf x)},\quad p_g(\mathbf x)=q_0(\mathbf x),
\end{align}
and $\mathcal L_G$ in~\cref{eq:g_loss} reaches the minimum $\log2$.
However, we have the following theorem. Proof is in \cref{subsec:a.1}.

\begin{theorem}\label{thm:nongrad}
Let $A,B$ be sets with positive $d$-dimensional Lebesgue measure, \textit{i.e.}, $\mu_d(A)>0,\mu_d(B)>0$.
Denote by $q_A(\mathbf x),q_B(\mathbf x)$ two distributions supported on $A,B$, respectively, \textit{i.e.}, $\mathrm{supp}\;q_A=\{\mathbf x\mid q_A(\mathbf x)\neq0\}=A$, $\mathrm{supp}\;q_B=B$.
Let $X\backslash Y=\{\mathbf x\mid\mathbf x\in X\text{ and }\mathbf x\notin Y\}$.
When $D$ reaches the optimality, and if $\mu_d(A\backslash B)>0$, then
\begin{align}
-\int q_A(\mathbf x)\log\frac{q_B(\mathbf x)}{q_A(\mathbf x)+q_B(\mathbf x)}\mathrm d\mathbf x=+\infty.
\end{align}
\end{theorem}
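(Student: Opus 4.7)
The plan is to decompose the integration domain by the relative positions of the supports $A$ and $B$, and to exploit the fact that the integrand is nonnegative everywhere yet attains $+\infty$ on a set of positive Lebesgue measure. Denote the integrand by
\begin{align*}
f(\mathbf x) \;:=\; -q_A(\mathbf x)\log\frac{q_B(\mathbf x)}{q_A(\mathbf x)+q_B(\mathbf x)},
\end{align*}
and partition $\mathbb R^d$ into $A\setminus B$, $A\cap B$, $B\setminus A$, and $(A\cup B)^c$. Because the leading factor $q_A$ vanishes outside $A$, the last two pieces contribute $0$ under the standard convention $0\cdot\log(\cdot)=0$, so the problem reduces to analyzing $f$ on the first two pieces.

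The decisive piece is $A\setminus B$. By the given characterization of supports, $q_A(\mathbf x)>0$ there while $q_B(\mathbf x)\equiv 0$, hence $q_B/(q_A+q_B)=0$ and $\log 0=-\infty$, so $f\equiv +\infty$ pointwise on $A\setminus B$. Since $\mu_d(A\setminus B)>0$ by hypothesis, integrating the pointwise $+\infty$ against the positive density $q_A$ on this set already produces $+\infty$ for the sub-integral. On the remaining piece $A\cap B$, both densities are strictly positive, so $q_B/(q_A+q_B)\in(0,1)$, the log is nonpositive, and $f\ge 0$; whatever finite value this sub-integral takes, it cannot subtract from $+\infty$.

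The main obstacle is exactly the question of cancellation: one must certify that no $-\infty$ contribution appears elsewhere to render the whole integral indeterminate. The upper bound $q_B/(q_A+q_B)\le 1$ on $A\cap B$, together with the vanishing of $q_A$ off $A$, is precisely what rules this out and makes the case split unambiguous. A secondary step is interpreting ``$D$ reaches optimality'': the pointwise minimizer of $\mathcal L_D$ in \cref{eq:d_loss} is $D^*(\mathbf x)=q_B(\mathbf x)/(q_A(\mathbf x)+q_B(\mathbf x))$ with $q_A=p_g$ and $q_B=q_0$, and substituting $D^*$ into $\mathcal L_G$ in \cref{eq:g_loss} reproduces the displayed integral. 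Hence the theorem reads as follows: whenever the generator places positive mass off the real data manifold, the generator loss under the optimal discriminator diverges and cannot provide meaningful gradient information, which is exactly the pathology that \textbf{\method} is designed to correct.
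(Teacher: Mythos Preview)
Your proposal is correct and follows essentially the same approach as the paper's proof: decompose the domain according to the supports, observe the integrand is everywhere nonnegative (since $q_B/(q_A+q_B)\le 1$), and then note that on $A\setminus B$ of positive measure the integrand is $+\infty$, forcing the full integral to diverge. Your explicit treatment of the no-cancellation issue and the interpretation of ``$D$ reaches optimality'' are welcome additions but do not depart from the paper's argument.
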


Let $q_0=q_B,p_g=q_A$, \Cref{thm:nongrad} claims that $\mathcal L_G$ remains non-optimal constant and provides no gradient to the generator when the generated data has positive-measure difference set over data manifold.
Empirically, real data is embedded in a very low-dimensional manifold in the pixel space, and so is the generated data due to the low-dimensional latent space.
The two manifolds will almost always have zero-measure intersection (transversal intersection or non-intersection), and thus positive-measure difference set, since $A\backslash B=A\backslash(A\cap B)$.
Therefore, \Cref{thm:nongrad} is almost always the case during GAN training.

We then turn to the necessary and sufficient conditions of optimality of generator loss, which is summarized below.
The proof is addressed in \cref{subsec:a.2}

\begin{theorem}\label{thm:mode_collapse}
Following the settings in \Cref{thm:nongrad}, when $D$ reaches the optimality, the following inequality reaches its optimality if and only if $\mu_d(A\backslash B)=\mu_d(B\backslash A)=0$, and $\mu_d(\{\mathbf x\mid q_A|_{A\cap B}(\mathbf x)\neq q_B|_{A\cap B}(\mathbf x)\})=0$.
\begin{align}
-\int q_A(\mathbf x)\log\frac{q_B(\mathbf x)}{q_A(\mathbf x)+q_B(\mathbf x)}\mathrm d\mathbf x&\geqslant\log2,
\end{align}
where $f|_X(\mathbf x)$ equals to $f(\mathbf x)$ on $X$ and 0 out of $X$.
\end{theorem}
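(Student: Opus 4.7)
My plan is to reduce the statement to the chain of inequalities $\mathcal L_G\geqslant\log 2$, and to read off the equality conditions from the two places where slack can enter. Writing $\mathcal L_G(q_A,q_B):=-\int q_A\log\frac{q_B}{q_A+q_B}\,\mathrm d\mathbf x$ for the generator loss at optimal $D$, I would first rewrite it as
\begin{align*}
\mathcal L_G(q_A,q_B)-\log 2=-\int q_A\log\frac{2q_B}{q_A+q_B}\,\mathrm d\mathbf x,
\end{align*}
so the claim $\mathcal L_G=\log 2$ becomes the integral on the right being zero. Before proving an inequality on this integrand, I would separate out the divergent regime: if $\mu_d(A\backslash B)>0$, \Cref{thm:nongrad} already gives $\mathcal L_G=+\infty$, so equality with $\log 2$ fails and this case is handled.

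In the remaining regime ($q_A$ supported, up to null sets, where $q_B>0$), I would apply Jensen's inequality to the concave function $\log$ against the probability density $q_A$, followed by the pointwise AM--GM inequality $\frac{2q_Aq_B}{q_A+q_B}\leqslant\frac{q_A+q_B}{2}$ (equivalent to $(q_A-q_B)^2\geqslant 0$):
\begin{align*}
\int q_A\log\frac{2q_B}{q_A+q_B}\,\mathrm d\mathbf x
\leqslant \log\!\int\!\frac{2q_Aq_B}{q_A+q_B}\,\mathrm d\mathbf x
\leqslant \log\!\int\!\frac{q_A+q_B}{2}\,\mathrm d\mathbf x=\log 1=0.
\end{align*}
This chain delivers $\mathcal L_G\geqslant\log 2$ globally and, combined with the first step, covers all configurations.

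For the equality analysis, both inequalities must saturate. Strict concavity of $\log$ forces the integrand $2q_B/(q_A+q_B)$ to be $q_A$-a.e.\ constant, while AM--GM saturates only where $q_A=q_B$ pointwise; the latter is strictly stronger and hence the binding condition. Thus $\mathcal L_G=\log 2$ iff $q_A=q_B$ almost everywhere on $\mathbb R^d$. By the support definitions $A=\{q_A\neq 0\}$ and $B=\{q_B\neq 0\}$, this is in turn equivalent to $\mu_d(A\backslash B)=\mu_d(B\backslash A)=0$ together with $q_A|_{A\cap B}=q_B|_{A\cap B}$ a.e., matching the stated conditions. The converse direction is immediate: under these conditions $q_B/(q_A+q_B)=1/2$ a.e.\ on $A$, so $\mathcal L_G=-\int q_A\log(1/2)\,\mathrm d\mathbf x=\log 2$.

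The main delicate point I expect is the null-set bookkeeping that translates ``$q_A=q_B$ a.e.''\ into the two separate measure conditions $\mu_d(A\backslash B)=0$ and $\mu_d(B\backslash A)=0$. Since the support is defined set-theoretically as $\{q\neq 0\}$ rather than only up to null sets, I need to argue that on $B\backslash A$ one has $q_A\equiv 0$ while $q_B\neq 0$ pointwise, so any a.e.\ equality $q_A=q_B$ forces $\mu_d(B\backslash A)=0$, and symmetrically for $A\backslash B$. The rest of the argument is purely a careful pairing of the Jensen and AM--GM equality cases with the theorem's three measure-theoretic conditions.
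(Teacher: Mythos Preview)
Your proposal is correct and takes a genuinely different route from the paper. The paper argues by brute-force decomposition of the integral over $A\backslash B$, $B\backslash A$, and $A\cap B$: sufficiency is a direct computation, while necessity is handled by three separate contradiction arguments, the most involved of which normalizes $q_A,q_B$ on the set $Z=\{q_A|_{A\cap B}\neq q_B|_{A\cap B}\}$ and invokes the standard GAN inequality $-\int q'_A\log\frac{q'_B}{q'_A+q'_B}>\log 2$ for distinct probability densities as a black box. By contrast, your Jensen--then--AM--GM chain
\[
\int q_A\log\tfrac{2q_B}{q_A+q_B}\;\leqslant\;\log\!\int\!\tfrac{2q_Aq_B}{q_A+q_B}\;\leqslant\;\log\!\int\!\tfrac{q_A+q_B}{2}=0
\]
proves the inequality and the equality case simultaneously and self-containedly: saturation of the second step forces $q_A=q_B$ a.e.\ on all of $\mathbb R^d$, which in turn automatically saturates Jensen and unpacks into the three measure conditions via the pointwise support definitions. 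Your approach is shorter and avoids the normalization trick and the circular-looking appeal to the GAN inequality on $Z$; the paper's approach, on the other hand, makes the role of each of the three conditions $\mu_d(A\backslash B)=0$, $\mu_d(B\backslash A)=0$, $\mu_d(Z)=0$ separately visible, which matches how they are used downstream in \cref{subsec:3.3}. One small note: your case split on $\mu_d(A\backslash B)>0$ is not actually needed, since in that regime the Jensen step already yields $-\infty\leqslant 0$ and $\mathcal L_G=+\infty$, but separating it does no harm.
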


\Cref{thm:mode_collapse} claims that the optimality of generator loss is equivalent to \textit{the generator distribution coinciding with the real distribution almost everywhere}.
This gives an insight of the generator behavior, \textit{i.e.}, generator correctly imitates data distribution only if the generator loss achieves optimality.
Combining with \Cref{thm:nongrad}, once $(\mathrm{supp}\;p_g)\backslash(\mathrm{supp}\;q_0)$ has positive Lebesgue measure, the generator distribution has not coincided with the ground-truth yet but will no longer be updated.
To be more detailed, the generator loss fails due to the \textit{low dimension of the data and generator manifolds}.
The two manifolds will almost always have zero-measure intersection (transversal intersection or non-intersection), and thus positive-measure difference set.

We further give a toy example designed on discrete data distribution.
The toy data is simulated by a mixture of 49 2-dimensional Gaussian distributions with extremely low variance.
Each data sample is a 2-dimensional feature tensor.
To demonstrate the poor performance on discrete data distribution, following~\citet{wang2022diffusiongan}, we train a small GAN, whose generator and discriminator are both parameterized by MLPs, with two 128-unit hidden layers and Leaky ReLU activation functions.
As shown in \cref{fig:toy_model}, vanilla GAN and DiffusionGAN cannot handle the discrete data, synthesizing continuous samples.
In other words, vanilla GAN tends to synthesizing a positive-measure set of samples out of the data manifold (\textit{i.e.}, the 49 grids).
This directly leads to gradient vanishing due to \Cref{thm:nongrad}.

\subsection{Score Matching Regularity}\label{subsec:3.3}

Unlike GANs, DPMs focus on the \textit{whole pixel space} via score matching, due to the forward diffusion process, which diffuses the data distribution to the normal distribution.
Thanks to this, score matching manages to serve as a regularity to facilitate GAN training.

We first delve into the theory of score matching.
Recall that in the DDIM sampling process~\cite{song2020denoising}, one first calculates $\hat{\boldsymbol\epsilon_t}=\boldsymbol\epsilon_\theta(\mathbf x_t,t)$ for the intermediate noisy result $\mathbf x_t$.
With this $\hat{\boldsymbol\epsilon_t}$, one can predict an approximation $\hat{\mathbf x}_0$ of the clean data.
However, this predicted $\hat{\mathbf x}_0$ is usually of poor quality, and needs further refinement by the iterative diffusion and denoising process.
Formally, given a sample $\mathbf x$, the one-step refinement $\mathcal R$ process with noise $\boldsymbol\epsilon$ and timestep $t$ is defined as the following form:
\begin{align}\label{eq:refine}
\mathcal R(\mathbf x,\boldsymbol\epsilon,t):=\mathbf x+\frac{\sigma_t}{\alpha_t}(\boldsymbol\epsilon-\boldsymbol\epsilon_\theta(\alpha_t\mathbf x+\sigma_t\boldsymbol\epsilon,t)).
\end{align}

\vspace{-5pt}

Note that infinitely many one-step refinements with infinitesimal $t$ could pull any out-of-data-manifold point back to data manifold~\cite{Welling2011BayesianLV}.
We summarize this property below.
Proof is addressed in \cref{subsec:a.3}.

\begin{theorem}\label{thm:score_matching}
Denote by $\mathrm{dist}(\mathbf x)$ the distance between $\mathbf x$ and $\mathrm{supp}\;q_0$.
For any $\mathbf y\notin\mathrm{supp}\;q_0$, define a sequence of random variables $\mathbf y_0=\mathbf y$, $\mathbf y_{k+1}=\mathcal R(\mathbf y_k,\boldsymbol\epsilon_k,t)$ with $\boldsymbol\epsilon_k\sim\mathcal N(\mathbf 0,\mathbf I)$.
Then $\{\mathbf y_k\}_{k=0}^{\infty}$ converges to $\mathrm{supp}\;q_0$, \textit{i.e.},
\begin{align}
\lim_{k\rightarrow+\infty,t\rightarrow0}\mathrm{dist}(\mathbf y_k)=0.
\end{align}
\end{theorem}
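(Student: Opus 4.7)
The plan is to recognize the refinement map $\mathcal R$ as (a discretization of) a Langevin-type update targeting the noised marginal $q_t$, and then let $t\to 0$ so that this target collapses onto $\mathrm{supp}\;q_0$. The starting point is Tweedie's formula applied to~\cref{eq:tran}: at the optimum of~\cref{eq:dpm_loss} one has $\boldsymbol\epsilon_\theta(\mathbf x_t,t)=-\sigma_t\nabla_{\mathbf x_t}\log q_t(\mathbf x_t)$, and plugging this into~\cref{eq:refine} rewrites the iteration as
\begin{equation*}
\mathbf y_{k+1}=\mathbf y_k+\frac{\sigma_t}{\alpha_t}\boldsymbol\epsilon_k+\frac{\sigma_t^2}{\alpha_t}\nabla\log q_t(\alpha_t\mathbf y_k+\sigma_t\boldsymbol\epsilon_k).
\end{equation*}
A first-order Taylor expansion of the score around $\mathbf y_k$ replaces the inner argument by $\mathbf y_k$ up to an $O(\sigma_t)$ remainder, exposing an SGLD-style step of the form $\mathbf y_{k+1}\approx\mathbf y_k+\tfrac{\eta}{2}\nabla\log q_t(\mathbf y_k)+\sqrt{\eta}\,\boldsymbol\epsilon_k$ with effective step size $\eta$ of order $\sigma_t^2$.

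I would then invoke the classical convergence of stochastic-gradient Langevin dynamics~\cite{Welling2011BayesianLV}: for a sufficiently smooth target and vanishing step size, the law of $\mathbf y_k$ converges to the invariant distribution $q_t$ as $k\to\infty$. By construction $q_t$ is the pushforward of $q_0$ under the noising map $\mathbf x\mapsto\alpha_t\mathbf x+\sigma_t\boldsymbol\xi$, so every $\mathbf x_t^\star\sim q_t$ admits the crude bound $\mathrm{dist}(\mathbf x_t^\star)\le|1-\alpha_t|\,\|\mathbf x_0\|+\sigma_t\|\boldsymbol\xi\|$ with $\mathbf x_0\sim q_0$, $\boldsymbol\xi\sim\mathcal N(\mathbf 0,\mathbf I)$. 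Since $\alpha_t\to 1$ and $\sigma_t\to 0$ as $t\to 0$, the right-hand side vanishes in probability (and in expectation under any mild moment condition on $q_0$); the stationary iterate therefore concentrates on $\mathrm{supp}\;q_0$ in the double limit, giving $\mathrm{dist}(\mathbf y_k)\to 0$.

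The hard part will be controlling the \emph{joint} limit $k\to\infty,t\to 0$ rigorously. The score $\nabla\log q_t$ behaves like $1/\sigma_t^2$ near the manifold, so both the Euler--Maruyama bias of the SGLD step and the constants governing the Taylor remainder degenerate as $t\to 0$; conversely, the mixing time for $q_t$ grows as its concentration around $\mathrm{supp}\;q_0$ tightens. A clean proof therefore needs a schedule $k=k(t)\to\infty$ that outpaces this deterioration, together with regularity assumptions on $q_0$ (compact support and a Lipschitz score on a tubular neighborhood of $\mathrm{supp}\;q_0$) that the statement leaves implicit; the rest of the argument is then a standard combination of SGLD convergence and the vanishing-width property of $q_t$.
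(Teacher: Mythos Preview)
Your proposal and the paper's proof share the same core move: use $\boldsymbol\epsilon_\theta=-\sigma_t\nabla\log q_t$ to rewrite $\mathcal R$ as a Langevin-type step and appeal to~\cite{Welling2011BayesianLV}. The difference is in how the double limit is organized. You run SGLD at \emph{fixed} $t$ toward its invariant measure $q_t$ (the $k\to\infty$ part), then separately argue that $q_t$ concentrates on $\mathrm{supp}\,q_0$ as $t\to 0$ via the explicit bound $\mathrm{dist}(\mathbf x_t^\star)\le|1-\alpha_t|\,\|\mathbf x_0\|+\sigma_t\|\boldsymbol\xi\|$. The paper instead identifies $\mathcal R$ directly as a discretization of the reverse-time SDE $\mathrm d\mathbf x_t=f(t)\mathbf x_t\,\mathrm dt-g^2(t)\nabla\log q_t(\mathbf x_t)\,\mathrm dt+g(t)\,\mathrm d\mathbf w$ and simply asserts that the joint limit $k\to\infty,\,t\to 0$ recovers the continuous solution terminating at $q_0$, without separating the two stages. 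Your decomposition is more explicit about the mechanism and more candid about the analytical obstacles (the blow-up of the score and the mixing time as $t\to 0$, the need for a schedule $k=k(t)$); the paper's proof does not engage with any of these and is purely heuristic at that level.
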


\vspace{-5pt}

Now we formally propose \method, which trains GAN with an extra \textit{score matching regularity} from pre-trained DPM in a plug-in sense.
Let $g_\phi$ be the generator, we design a regularization term as below:
\begin{align}\label{eq:diffusion_loss}
\mathcal L_{score}=\mathbb E_{\mathbf z,\boldsymbol\epsilon,t}[\|\boldsymbol\epsilon_\theta(\alpha_tg_\phi(\mathbf z)+\sigma_t\boldsymbol\epsilon,t)-\boldsymbol\epsilon\|_2^2].
\end{align}
With a loss weight $\lambda_{score}$, the total objective of generator turns out to be $\mathcal L_G+\lambda_{score}\mathcal L_{score}$.
One can easily see that
\begin{align}
&\|\mathcal R(\mathbf x,\boldsymbol\epsilon,t)-\mathbf x\|_2^2\propto\|\boldsymbol\epsilon_\theta(\alpha_t\mathbf x+\sigma_t\boldsymbol\epsilon,t)-\boldsymbol\epsilon\|_2^2, \\
&\mathcal L_{score}\propto\mathbb E_{\mathbf z,\boldsymbol\epsilon,t}[\|\mathcal R(g_\phi(\mathbf z),\boldsymbol\epsilon,t)-g_\phi(\mathbf z)\|_2^2].\label{eq:equiv}
\end{align}

According to \Cref{thm:score_matching}, the optimality of \cref{eq:diffusion_loss} is equivalent to \textit{all generated samples supporting on the data manifold}.
However, such an optimality requires infinitely many one-step refinements with infinitesimal $t$, which is impractical during GAN training.
Therefore, practically we implement \method via \textit{relaxation} by employing finitely many one-step refinements with a relatively small $t$ instead, as shown in \cref{eq:equiv}.
On the other hand, \cref{eq:equiv} indicates that the score matching regularity aims to narrow the distance between synthesized samples and data manifold.

Under this circumstance, when generator distribution has positive-measure difference set over data manifold (indicating gradient vanishing), for each out-of-data-manifold sample $\mathbf x$, $\|\mathcal R(\mathbf x,\boldsymbol\epsilon,t)-\mathbf x\|_2^2$ remains positive, and $\mathcal L_{score}$ provides gradient for generator persistently to guide $\mathbf x$ to lie on data manifold.
Once all generated samples support on the data manifold, $\mathcal L_{score}$ will achieve optimality, and thus gradient will be annihilated.
In this case, the gradient vanishing issue can be largely mitigated, and \cref{eq:g_loss} will resume to supervise GAN training guaranteed by \Cref{thm:mode_collapse}.  
This profound conclusion facilitates GAN training from a novel perspective.

As an additional objective for generator, we further show the convergence and robustness of \method theoretically.
Recall that generator loss reaches its optimality if and only if the generator distribution coincides with the real distribution almost everywhere, indicating that all generated samples support on the data manifold (equivalent to the optimality of \cref{eq:diffusion_loss}).
In other words, once the generator loss is optimal, $\mathcal L_{score}$ will also achieves optimality, indicating the convergence and robustness.
Quantitative results of mean and variance are reported in \cref{tab:comparison_var}.

\begin{figure*}[!ht]
\centering
\includegraphics[width=1.0\textwidth]{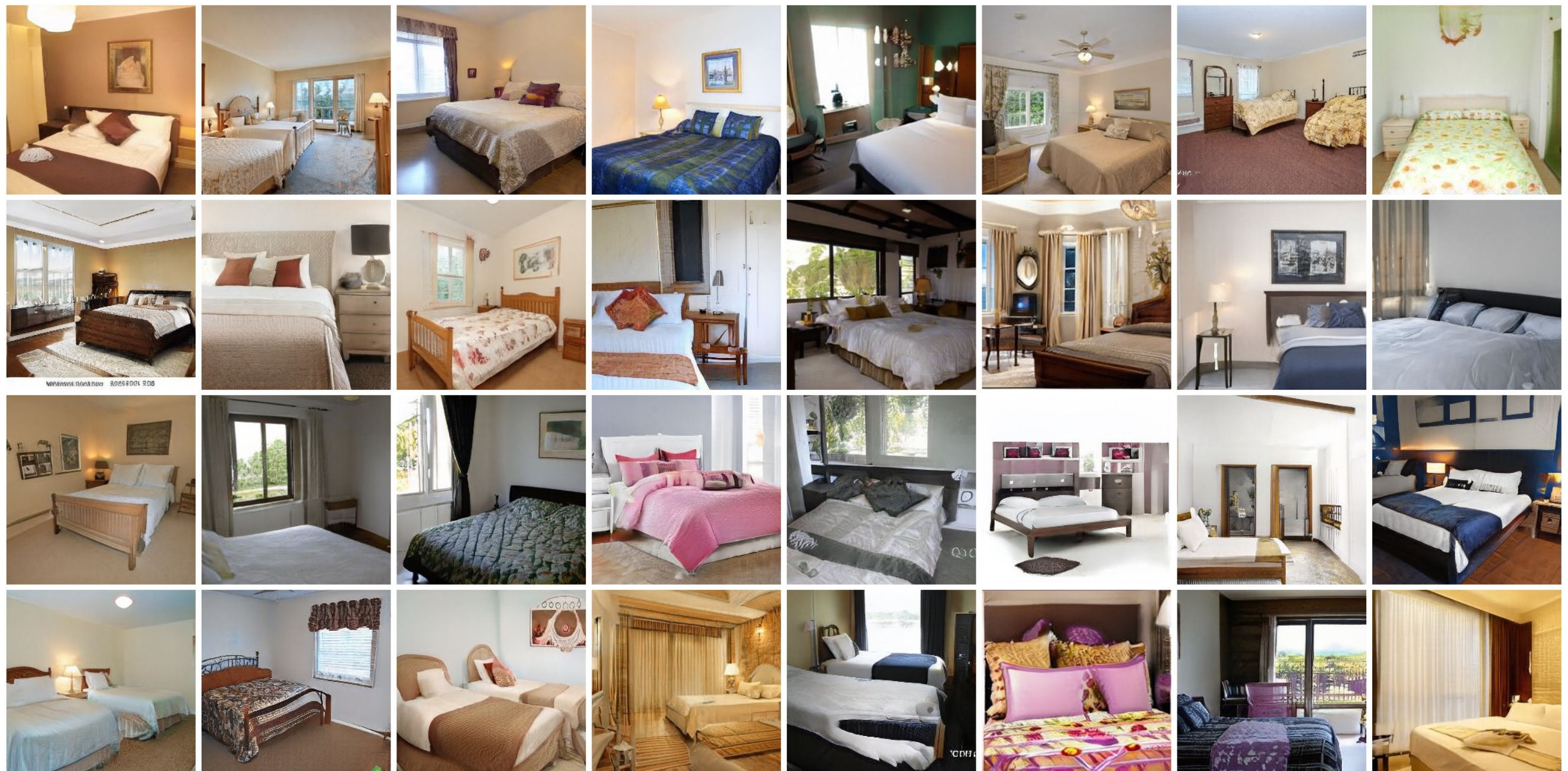}
\vspace{-12pt}
\caption{
    \textbf{Diverse results} generated by \method upon StyleGAN2~\cite{Karras2019AnalyzingAI} trained on LSUN Bedroom 256x256 dataset~\cite{yu15lsun}.
    We randomly sample the global latent code $\mathbf z$ for each image.
}
\label{fig:bedroom}
\vspace{-0pt}
\end{figure*}

It is also noteworthy that, serving as a relaxation, hyper-parameters are attached great importance to the efficacy of \method.
For instance, small $\lambda_{score}$ suggests inconspicuous guidance, weakening the functionality of \method.
However, when facing discrete data distribution, too strong regularity may restrict the generator distribution on only few modes, indicating that large $\lambda_{score}$ may affect synthesis diversity.
Detailed ablation study of $\lambda_{score}$ is addressed in \cref{tab:ablation_all}.

To take a further step, \method can be generalized to conditional GANs, in which GAN loss becomes:
\begin{align}
\mathcal L_G&=-\mathbb E_{\mathbf z,c}[\log D(G(\mathbf z,c),c)], \label{eq:cg_loss} \\
\mathcal L_D&=-\mathbb E_{\mathbf x,c}[\log D(\mathbf x,c)] \nonumber\\
&\qquad-\mathbb E_{\mathbf z,c}[\log(1-D(G(\mathbf z,c),c))], \label{eq:cd_loss}
\end{align}
where $c$ is the input condition.
To supervise the conditional GANs using \method, we simply add score matching regularity with a conditioned DPM as below:
\begin{align}{\hspace{-5pt}}\label{eq:cdiffusion_loss}
\mathcal L_{score}=\mathbb E_{\mathbf z,\boldsymbol\epsilon,t,c}[\|\boldsymbol\epsilon_\theta(\alpha_tg_\phi(\mathbf z,c)+\sigma_t\boldsymbol\epsilon,c,t)-\boldsymbol\epsilon\|_2^2].
\end{align}
We provide a theorem similar to \Cref{thm:score_matching} confirming the feasibility of \method under conditional generation settings, which is addressed in \cref{subsec:a.4}.

\subsection{Training Strategy}\label{subsec:3.4}

As a supernumerary regularity involved time-consuming DPM, it might be challenging to efficiently and effectively plug \method in native GAN training.
We propose the lazy strategy and narrowed timestep interval.
It is noteworthy that, even though our approach adopts the mechanisms of both adversarial learning and score matching regularity, there is no instability in the entire training process.

\begin{figure*}[!ht]
\centering
\includegraphics[width=1.0\textwidth]{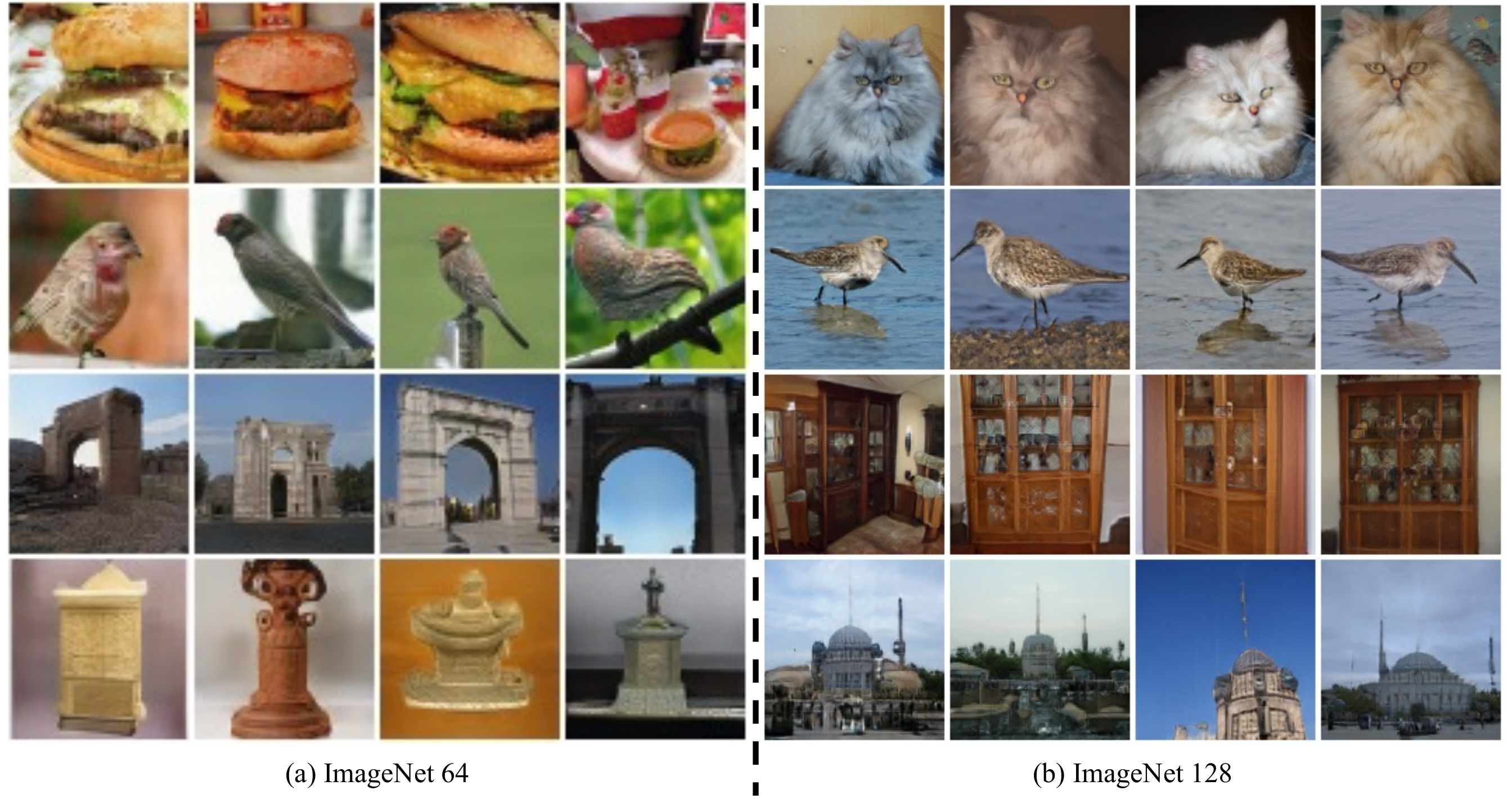}
\vspace{-18pt}
\caption{
    \textbf{Diverse results} generated by \method upon (a) Aurora~\cite{zhu2023aurora} on ImageNet 64x64 dataset~\cite{dengjia2009} and (b) BigGAN~\cite{Brock2018LargeSG} on ImageNet 128x128 dataset~\cite{dengjia2009}.
    We randomly sample four global latent codes $\mathbf z$ for each label condition $c$, demonstrated in each row.
}
\label{fig:imagenet}
\vspace{-2pt}
\end{figure*}

\noindent\textbf{Lazy strategy.}
We employ \textit{lazy strategy}~\cite{Karras2019AnalyzingAI} for \method, which applies the regularity less frequently than the main loss function, thus greatly diminishing the DPM computational cost.
\cref{tab:ablation_all} studies the efficacy of the regularity under different frequencies, providing an empirically adequate strategy.

\noindent\textbf{Narrowed timestep interval.}
Recall that score matching regularity can be considered as guidance from DDIM refinement.
Therefore, the involved timestep is attached great importance to the refinement performance.
Theoretically, large timestep suggests large discretization step of the differential equation, harming the quality of the refinement.
On the other hand, finite refinement steps entail that tiny timestep leads to inconspicuous refinement, since $\frac{\sigma_t}{\alpha_t}$ in \cref{eq:refine} tends to zero.
Performance comparison among different timestep intervals is addressed in \cref{tab:ablation_all}.

\definecolor{mygreen}{RGB}{34,170,133}
\begin{table*}[!ht]
\begin{minipage}[t]{0.48\textwidth}
\caption{
    \textbf{Sample quality} on CIFAR10~\cite{Krizhevsky_2009_17719}.
    $^*$Methods that require synthetic data construction for distillation.
    For clearer demonstration, one-step approaches including GANs and DPMs are highlighted in \textbf{\textcolor{gray}{gray}}.
}
\label{tab:cifar}
\vskip 0.15in
\centering
\SetTblrInner{rowsep=0.548pt}               
\SetTblrInner{colsep=3.0pt}                
\scriptsize
\begin{tblr}{
    cell{1-39}{2-4}={halign=c,valign=m},   
    cell{1-39}{1}={halign=l,valign=m},     
    hline{1,2,40}={1-4}{1.0pt},            
    hline{19,32}={1-4}{},                  
    cell{19-39}{1-4}={bg=lightgray!35},
}
METHOD                                            & NFE $(\downarrow)$ & FID $(\downarrow)$ & IS $(\uparrow)$ \\
ScoreSDE~\cite{song2020score}                     &               2000 &               2.20 &        \bf 9.89 \\
DDPM~\cite{ho2020denoising}                       &               1000 &               3.17 &            9.46 \\
LSGM~\cite{vahdat2021score}                       &                147 &               2.10 &              -- \\
PFGM~\cite{xu2022poisson}                         &                110 &               2.35 &            9.68 \\
EDM~\cite{Karras2022edm}                          &                 35 &           \bf 1.97 &              -- \\
DDIM~\cite{song2020denoising}                     &                 50 &               4.67 &              -- \\
DDIM~\cite{song2020denoising}                     &                 30 &               6.84 &              -- \\
DDIM~\cite{song2020denoising}                     &                 10 &               8,23 &              -- \\
DPM-solver-3~\cite{lu2022dpm}                     &                 12 &               6.03 &              -- \\
3-DEIS~\cite{zhang2022fast}                       &                 10 &               4.17 &              -- \\
UniPC~\cite{zhao2023unipc}                        &                  8 &               5.10 &              -- \\
UniPC~\cite{zhao2023unipc}                        &                  5 &              23.22 &              -- \\
Denoise Diffusion GAN (T=2)~\cite{xiao2022DDGAN}  &                  2 &               4.08 &            9.80 \\
PD~\cite{SalimansH22}                             &                  2 &               5.58 &            9.05 \\
CT~\cite{song2023consistency}                     &                  2 &               5.83 &            8.85 \\
iCT~\cite{song2023improved}                       &                  2 &               2.46 &            9.80 \\
CD~\cite{song2023consistency}                     &                  2 &               2.93 &            9.75 \\
Denoise Diffusion GAN (T=1)~\cite{xiao2022DDGAN}  &              \bf 1 &              14.60 &            8.93 \\
KD$^*$~\cite{luhman2021knowledge}                 &              \bf 1 &               9.36 &              -- \\
TDPM~\cite{zheng2022truncated}                    &              \bf 1 &               8.91 &            8.65 \\
1-ReFlow~\cite{liu2022flow}                       &              \bf 1 &             378.00 &            1.13 \\
CT~\cite{song2023consistency}                     &              \bf 1 &               8.70 &            8.49 \\
iCT~\cite{song2023improved}                       &              \bf 1 &           \bf 2.83 &        \bf 9.54 \\
1-ReFlow (+distill)$^*$~\cite{liu2022flow}        &              \bf 1 &               6.18 &            9.08 \\
2-ReFlow (+distill)$^*$~\cite{liu2022flow}        &              \bf 1 &               4.85 &            9.01 \\
3-ReFlow (+distill)$^*$~\cite{liu2022flow}        &              \bf 1 &               5.21 &            8.79 \\
PD~\cite{SalimansH22}                             &              \bf 1 &               8.34 &            8.69 \\
CD-L2~\cite{song2023consistency}                  &              \bf 1 &               7.90 &              -- \\
CD-LPIPS~\cite{song2023consistency}               &              \bf 1 &               3.55 &            9.48 \\
Diff-Instruct~\cite{luo2023diffinstruct}          &              \bf 1 &               4.19 &              -- \\
AutoGAN~\cite{Gong_2019_ICCV}                     &              \bf 1 &              12.40 &            8.55 \\
E2GAN~\cite{Tian_2020_ECCV}                       &              \bf 1 &              11.30 &            8.51 \\
TransGAN~\cite{jiang2021transgan}                 &              \bf 1 &               9.26 &            9.05 \\
StyleGAN-XL~\cite{Sauer2021ARXIV}                 &              \bf 1 &           \bf 1.85 &              -- \\
Diffusion StyleGAN2~\cite{wang2022diffusiongan}   &              \bf 1 &               3.19 &            9.94 \\
StyleGAN2-ADA~\cite{Karras2019AnalyzingAI}        &              \bf 1 &               2.42 &           10.14 \\
StyleGAN2-ADA+Tune+DI~\cite{luo2023diffinstruct}  &              \bf 1 &               2.27 &           10.11 \\
StyleGAN2-ADA \textcolor{mygreen}{\bf + \method}  &              \bf 1 &               2.06 &       \bf 10.22 \\
\end{tblr}
\end{minipage}
\hfill
\begin{minipage}[t]{0.48\textwidth}
\caption{
    \textbf{Sample quality} on ImageNet~\cite{dengjia2009} and LSUN Bedroom 256x256~\cite{yu15lsun}.
    $^\dag$Methods that utilize distillation techniques.
    $^\ddag$Methods that are trained by ourselves with official implementation.
    For clearer demonstration, one-step approaches including GANs and DPMs are highlighted in \textbf{\textcolor{gray}{gray}}.
}
\label{tab:imagenet_lsun}
\vskip 0.15in
\centering
\SetTblrInner{rowsep=0.5pt}      
\SetTblrInner{colsep=1.5pt}      
\scriptsize
\begin{tblr}{
    cell{1-36}{2-5}={halign=c,valign=m},         
    cell{1-36}{1}={halign=l,valign=m},           
    hline{1-3,18,19,22,23,37}={1-5}{1.0pt},      
    hline{7,14,20,27,32}={1-5}{},                
    cell{5,6,12-17,20,21,25,26,31-36}{1-5}={bg=lightgray!35},
}
METHOD                                            & NFE $(\downarrow)$ & FID $(\downarrow)$ & Prec. $(\uparrow)$ & Rec. $(\uparrow)$ \\
\textbf{ImageNet 64x64}                           &                    &                    &                    &                   \\[1.2pt]
PD$^\dag$~\cite{SalimansH22}                      &                  2 &               8.95 &               0.63 &          \bf 0.65 \\
CD$^\dag$~\cite{song2023consistency}              &                  2 &           \bf 4.70 &           \bf 0.69 &              0.64 \\
PD$^\dag$~\cite{SalimansH22}                      &              \bf 1 &              15.39 &               0.59 &              0.62 \\
CD$^\dag$~\cite{song2023consistency}              &              \bf 1 &               6.20 &               0.68 &              0.63 \\
ADM~\cite{dhariwal2021diffusion}                  &                250 &           \bf 2.07 &           \bf 0.74 &              0.63 \\
EDM~\cite{Karras2022edm}                          &                 79 &               2.44 &               0.71 &          \bf 0.67 \\
DDIM~\cite{song2020denoising}                     &                 50 &              13.70 &               0.65 &              0.56 \\
DEIS~\cite{zhang2022fast}                         &                 10 &               6.65 &                 -- &                -- \\
CT~\cite{song2023consistency}                     &                  2 &              11.10 &               0.69 &              0.56 \\
CT~\cite{song2023consistency}                     &              \bf 1 &              13.00 &               0.71 &              0.47 \\
iCT~\cite{song2023improved}                       &              \bf 1 &               4.02 &               0.70 &              0.63 \\
StyleGAN2~\cite{Karras2019AnalyzingAI}    &              \bf 1 &              21.32 &               0.42 &              0.36 \\
StyleGAN2 \textcolor{mygreen}{\bf + \method}      &              \bf 1 &              18.31 &           \bf 0.45 &              0.39 \\
Aurora$^\ddag$~\cite{zhu2023aurora}               &              \bf 1 &               8.87 &               0.41 &              0.48 \\
Aurora \textcolor{mygreen}{\bf + \method}         &              \bf 1 &           \bf 7.11 &               0.42 &          \bf 0.49 \\
\textbf{ImageNet 128x128}                         &                    &                    &                    &                   \\[1.2pt]
ADM~\cite{dhariwal2021diffusion}                  &                250 &               5.91 &               0.70 &              0.65 \\
BigGAN$^\ddag$~\cite{Brock2018LargeSG}            &              \bf 1 &              10.76 &               0.73 &              0.29 \\
BigGAN \textcolor{mygreen}{\bf + \method}         &              \bf 1 &           \bf 9.49 &           \bf 0.77 &          \bf 0.30 \\
\textbf{LSUN Bedroom 256x256}                     &                    &                    &                    &                   \\[1.2pt]
PD$^\dag$~\cite{SalimansH22}                      &                  2 &               8.47 &               0.56 &          \bf 0.39 \\
CD$^\dag$~\cite{song2023consistency}              &                  2 &           \bf 5.22 &           \bf 0.68 &          \bf 0.39 \\
PD$^\dag$~\cite{SalimansH22}                      &              \bf 1 &              16.92 &               0.47 &              0.27 \\
CD$^\dag$~\cite{song2023consistency}              &              \bf 1 &               7.80 &               0.66 &              0.34 \\
DDPM~\cite{ho2020denoising}                       &               1000 &               4.89 &               0.60 &              0.45 \\
ADM~\cite{dhariwal2021diffusion}                  &               1000 &           \bf 1.90 &               0.66 &          \bf 0.51 \\
EDM~\cite{Karras2022edm}                          &                 79 &               3.57 &               0.66 &              0.45 \\
CT~\cite{song2023consistency}                     &                  2 &               7.85 &           \bf 0.68 &              0.33 \\ 
CT~\cite{song2023consistency}                     &              \bf 1 &              16.00 &               0.60 &              0.17 \\
PGGAN~\cite{Karras2017ProgressiveGO}              &              \bf 1 &               8.34 &                 -- &                -- \\
PG-SWGAN~\cite{wu19pgsw}                          &              \bf 1 &               8.00 &                 -- &                -- \\
StyleGAN2~\cite{Karras2019AnalyzingAI}            &              \bf 1 &               2.35 &               0.59 &              0.48 \\
Diffusion StyleGAN2~\cite{wang2022diffusiongan}   &              \bf 1 &               3.65 &               0.60 &              0.32 \\
StyleGAN2 \textcolor{mygreen}{\bf + \method}      &              \bf 1 &           \bf 1.98 &           \bf 0.61 &          \bf 0.49 \\
\end{tblr}
\end{minipage}
\vspace{-5pt}
\end{table*}

\section{Experiments}\label{sec:exp}

\subsection{Experimental Setups}\label{subsec:4.1}

\noindent\textbf{Datasets and baselines.}
We apply \method to previous seminal GANs, including StyleGAN2~\cite{Karras2019AnalyzingAI}, BigGAN~\cite{Brock2018LargeSG}, and Aurora~\cite{zhu2023aurora}.
We train StyleGAN2 on CIFAR10~\citep{Krizhevsky_2009_17719}, ImageNet 64x64~\cite{dengjia2009}, and LSUN Bedroom 256x256~\cite{yu15lsun}.
Additionally, we train BigGAN and Aurora on ImageNet 128x128 and 64x64~\cite{dengjia2009}, respectively.

\noindent\textbf{Evaluation metrics.}
We draw 50,000 samples for Fr\'{e}chet Inception Distance (FID)~\cite{heusel2017gans} to evaluate the fidelity of the synthesized images.
Inception Score (IS)~\cite{Salimans2016ImprovedTF} measures how well a model captures the full ImageNet class distribution while still convincingly producing individual samples from a single class.
Finally, we use Improved Precision (Prec.) and Recall (Rec.)~\cite{Kynknniemi2019ImprovedPA} to separately measure sample fidelity (Precision) and diversity (Recall).

\noindent\textbf{Implementation details.}
We train \method with NVIDIA A100 GPUs.
With abundant powerful pre-trained DPMs as expertise, we choose the state-of-the-art pre-trained ADM~\cite{dhariwal2021diffusion}\footnote{https://github.com/openai/guided-diffusion} and EDM~\cite{Karras2022edm}\footnote{https://github.com/NVlabs/edm} provided in the official implementation.
Regarding GANs, we use the third-party implementation of StyleGAN2\footnote{https://github.com/bytedance/Hammer}~\cite{Karras2019AnalyzingAI} on CIFAR10, ImageNet, and LSUN Bedroom under Hammer~\cite{hammer2022} (official results on CIFAR10 and LSUN Bedroom are reported) and officially implemented BigGAN\footnote{https://github.com/ajbrock/BigGAN-PyTorch}~\cite{Brock2018LargeSG} and Aurora\footnote{https://github.com/zhujiapeng/Aurora}~\cite{zhu2023aurora}.

\subsection{Toy Example on Self-designed Dataset}\label{subsec:4.2}

We conduct experiments of generation task on the discrete data distribution.
The toy data is simulated by a mixture of 49 2-dimensional Gaussian distributions with extremely low variance.
Following~\citet{wang2022diffusiongan}, we train a small GAN model, whose generator and discriminator are both parameterized by MLPs, with two 128-unit hidden layers and Leaky ReLU activation functions.
The training results are shown in \cref{fig:toy_model}.
Note that the vanilla GAN exhibits poor synthesis discreteness.
By adopting the noise injection to discriminator, DiffusionGAN~\cite{wang2022diffusiongan} turns to fit the distribution of noisy data, endeavoring to promote synthesis diversity.
However, this compromises the synthesis quality to a certain extent, making the generated samples less discrete.
As a comparison, our \method is capable of capturing the discrete distribution, confirming the feasibility by simply adding the score matching regularity.
This indicates that \method manages to alleviate the gradient vanishing by eliminating out-of-data-manifold samples.

\subsection{Results on Real Datasets}\label{subsec:4.3}

\noindent\textbf{Qualitative results.}
We showcase some results in \cref{fig:bedroom,fig:imagenet}.
One can see that, with score matching regularity, GAN is more capable of synthesizing samples addressed on data manifold, especially the conditional generation in \cref{fig:imagenet}, getting out of the dilemma of gradient vanishing.
It is also noteworthy that \method promotes the synthesis diversity to a certain extent, since generator loss provides more significant guidance on the data manifold.

\vspace{3.5pt}

\noindent\textbf{Quantitative comparison.}
Besides the exhibited qualitative results, we also provide quantitative comparison between baseline and \method-improving version on various state-of-the-art GANs, conveying an overall picture of its capability of promoting generation performance.
In \cref{tab:cifar,tab:imagenet_lsun}, we report the evaluation results on three different data domains, including CIFAR10, LSUN Bedroom 256x256, and ImageNet.
We can tell that \method achieves performance improvement on the three datasets.

\begin{table*}[!ht]
\caption{
    \textbf{Ablation study} of frequency of lazy strategy, narrowed timestep interval, and loss weight $\lambda_{score}$ using StyleGAN2-ADA~\cite{Karras2019AnalyzingAI} on CIFAR10~\cite{Krizhevsky_2009_17719}.
    The four values in each cell represent the evaluation metric with respect to $\lambda_{score}=0.01,0.025,0.05,0.1$, respectively.
}
\label{tab:ablation_all}
\vspace{-5pt}
\vskip 0.15in
\centering
\footnotesize
\SetTblrInner{rowsep=1.5pt}            
\SetTblrInner{colsep=6.0pt}            
\begin{tblr}{
    cells={halign=c,valign=m},         
    hline{2,5,8,11,14,17}={},    
    hline{1,4,7,10,13,16,19}={1.0pt},  
    vline{2}={},               
}
    Freq. $=4$         &            $t\in[5,15]$ &           $t\in[40,60]$ &          $t\in[90,110]$ &         $t\in[225,275]$ \\
    FID ($\downarrow$) &     2.30/2.32/2.33/2.37 &     2.24/2.26/2.28/2.31 &     2.24/2.28/2.31/2.33 &     2.25/2.29/2.33/2.32 \\
    IS ($\uparrow$)    & 10.16/10.15/10.15/10.14 & 10.19/10.21/10.18/10.17 & 10.19/10.18/10.17/10.16 & 10.21/10.20/10.20/10.21 \\
    Freq. $=8$         &            $t\in[5,15]$ &           $t\in[40,60]$ &          $t\in[90,110]$ &         $t\in[225,275]$ \\
    FID ($\downarrow$) &     2.17/2.22/2.17/2.18 &     2.06/2.08/2.07/2.08 &     2.09/2.10/2.12/2.12 &     2.10/2.09/2.10/2.11 \\
    IS ($\uparrow$)    & 10.15/10.14/10.16/10.15 & 10.22/10.20/10.20/10.20 & 10.20/10.17/10.21/10.18 & 10.24/10.25/10.23/10.26 \\
    Freq. $=16$        &            $t\in[5,15]$ &           $t\in[40,60]$ &          $t\in[90,110]$ &         $t\in[225,275]$ \\
    FID ($\downarrow$) &     2.15/2.20/2.18/2.26 &     2.07/2.11/2.15/2.19 &     2.10/2.11/2.19/2.20 &     2.16/2.15/2.16/2.19 \\
    IS ($\uparrow$)    & 10.16/10.15/10.17/10.15 & 10.22/10.20/10.18/10.17 & 10.22/10.22/10.19/10.18 & 10.20/10.21/10.21/10.21 \\
    Freq. $=32$        &            $t\in[5,15]$ &           $t\in[40,60]$ &          $t\in[90,110]$ &         $t\in[225,275]$ \\
    FID ($\downarrow$) &     2.18/2.20/2.22/2.23 &     2.17/2.17/2.19/2.19 &     2.18/2.18/2.19/2.23 &     2.22/2.24/2.24/2.27 \\
    IS ($\uparrow$)    & 10.17/10.16/10.17/10.16 & 10.21/10.21/10.19/10.20 & 10.19/10.21/10.21/10.19 & 10.21/10.20/10.19/10.18 \\
    Freq. $=64$        &            $t\in[5,15]$ &           $t\in[40,60]$ &          $t\in[90,110]$ &         $t\in[225,275]$ \\
    FID ($\downarrow$) &     2.20/2.23/2.27/2.25 &     2.19/2.20/2.22/2.24 &     2.19/2.27/2.25/2.27 &     2.28/2.25/2.26/2.34 \\
    IS ($\uparrow$)    & 10.18/10.16/10.16/10.17 & 10.20/10.20/10.19/10.18 & 10.21/10.19/10.20/10.18 & 10.18/10.18/10.17/10.17 \\
    Freq. $=128$       &            $t\in[5,15]$ &           $t\in[40,60]$ &          $t\in[90,110]$ &         $t\in[225,275]$ \\
    FID ($\downarrow$) &     2.22/2.24/2.28/2.29 &     2.20/2.23/2.24/2.26 &     2.22/2.28/2.30/2.29 &     2.28/2.29/2.33/2.35 \\
    IS ($\uparrow$)    & 10.16/10.15/10.15/10.25 & 10.19/10.19/10.17/10.15 & 10.18/10.16/10.14/10.15 & 10.17/10.15/10.16/10.14 \\
\end{tblr}
\vspace{-10pt}
\end{table*}

\subsection{Analyses}\label{subsec:4.4}

\begin{table*}[!ht]
\caption{
    \textbf{Quantitative results} measured by FID~($\downarrow$), Precision~($\uparrow$), and Recall~($\uparrow$) on ImageNet 64x64.
    We report the mean and variance of evaluation metrics with 5 independent sampling.
}
\label{tab:comparison_var}
\vspace{-10pt}
\vskip 0.15in
\centering
\footnotesize
\SetTblrInner{rowsep=1.0pt}      
\SetTblrInner{colsep=22.0pt}     
\begin{tblr}{
    cell{1-4}{2-5}={halign=c,valign=m},         
    cell{1-4}{1}={halign=l,valign=m},           
    hline{1,5}={1-5}{1.0pt},                    
    hline{2}={1-5}{},                           
}
Method                      & FID ($\downarrow$) & Prec. ($\uparrow$) & Rec. ($\uparrow$) & \# GPUs \\
Aurora~\cite{zhu2023aurora} &               8.87 &               0.41 &              0.48 &      16 \\
Aurora + \method            &    7.74 $\pm$ 0.16 &    0.41 $\pm$ 0.00 &   0.49 $\pm$ 0.00 &      16 \\
Aurora + \method            &    7.16 $\pm$ 0.05 &    0.42 $\pm$ 0.00 &   0.49 $\pm$ 0.00 &      32 \\
\end{tblr}
\vspace{-10pt}
\end{table*}

\begin{table*}[!ht]
\caption{
    \textbf{Comparison} of computational cost on Aurora~\cite{zhu2023aurora} and BigGAN~\cite{Brock2018LargeSG} on ImageNet~\cite{dengjia2009} with resolution 64 and 128, respectively.
    We report FID~($\downarrow$), GPU memory \textit{w/} and \textit{w/o} regularity, respectively.
}
\label{tab:memory_cost}
\vspace{-5pt}
\vskip 0.15in
\centering
\footnotesize
\SetTblrInner{rowsep=1.0pt}      
\SetTblrInner{colsep=8.0pt}      
\begin{tblr}{
    cell{1-6}{2-4}={halign=c,valign=m},         
    cell{1-6}{1}={halign=l,valign=m},           
    hline{1,7}={1-4}{1.0pt},                    
    hline{2}={1-4}{},                           
}
Method                                     & FID ($\downarrow$) & GPU Memory \textit{w/o} Regularity & GPU Memory \textit{w/} Regularity \\
Aurora~\cite{zhu2023aurora}                &               8.87 &                           36.10 GB &                               N/A \\
Aurora + \method                           &               7.11 &                           36.10 GB &                          56.00 GB \\
Aurora + \method (omitting U-Net Jacobian) &               7.34 &                           36.10 GB &                          37.22 GB \\
BigGAN~\cite{Brock2018LargeSG}             &              10.76 &                           16.73 GB &                               N/A \\
BigGAN + \method                           &               9.49 &                           16.73 GB &                          21.67 GB \\
\end{tblr}
\vspace{-0pt}
\end{table*}

\begin{figure*}[!ht]
\centering
\includegraphics[width=1.0\textwidth]{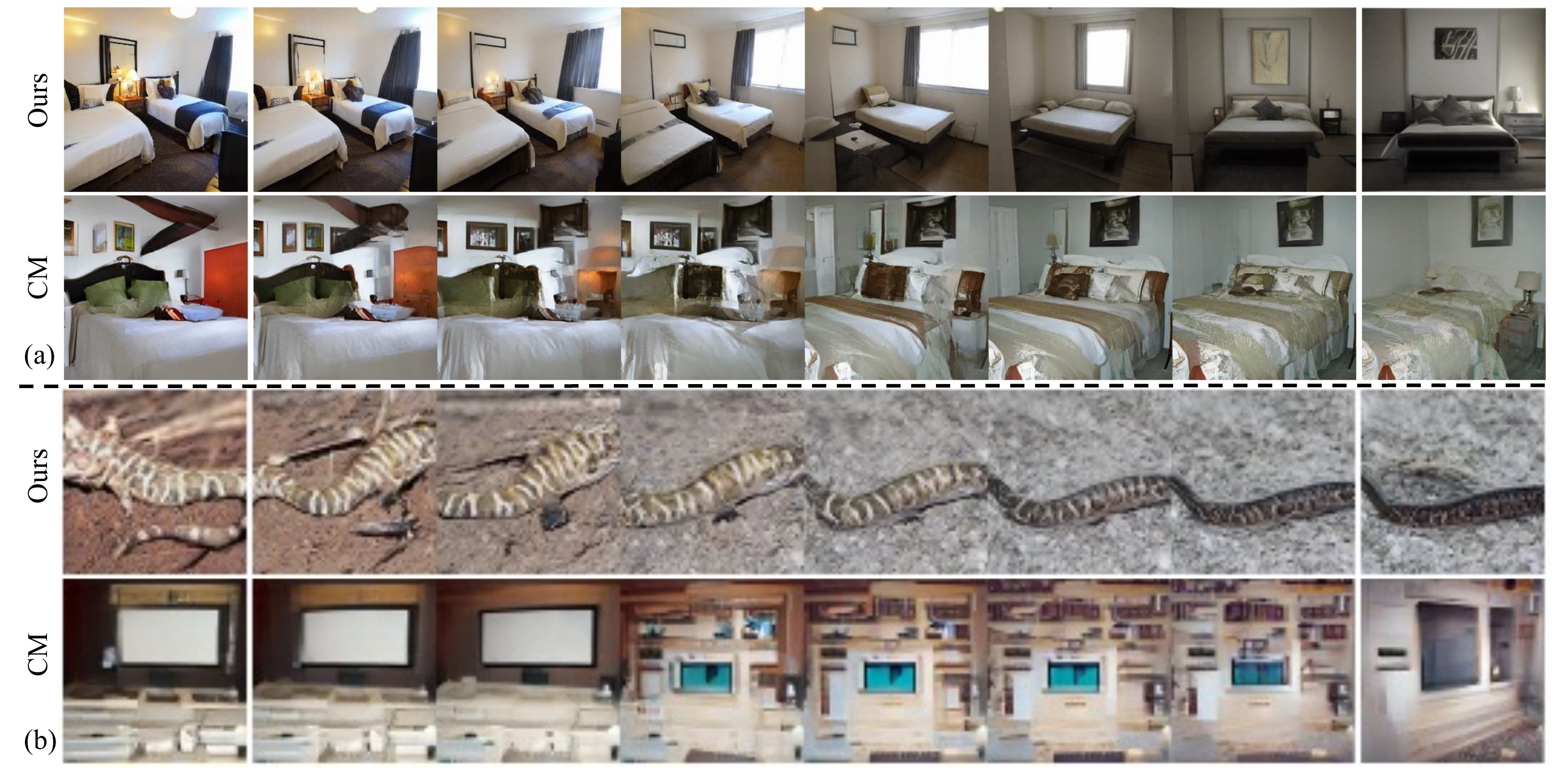}
\vspace{-20pt}
\caption{%
    \textbf{Visualization} of latent interpolation results on (a) LSUN Bedroom 256x256, and (b) ImageNet 64x64.
    We employ StyleGAN2~\cite{Karras2019AnalyzingAI} and Consistency Model (CM)~\cite{song2023consistency} on LSUN Bedroom 256x256 dataset, interpolating in the disentangled latent space.
    As for interpolation on ImageNet 64x64 dataset, we introduce Aurora~\cite{zhu2023aurora} and CM~\cite{song2023consistency}.
    We fix the label condition $c$, and only interpolate in the disentangled latent space $\mathcal W$.
    It is noteworthy that both StyleGAN2 and Aurora are strongly capable of synthesizing correct interpolation results, due to the extremely smooth and well-studied latent spaces.
    However, CM fails to generate interpolation results, due to poor semantic continuity in the latent space.
}
\vspace{-1pt}
\label{fig:interp}
\end{figure*}

\noindent\textbf{Convergence and robustness.}
Recall that \method alters the objective for generator and employs supervision involving random noise.
This may lead to potential non-robustness or convergence issue.
We compute the mean and variance of evaluation metrics with 5 independent sampling.
\cref{tab:comparison_var} confirms the robustness of \method convincingly.
Besides, abundant ablation studies in \cref{tab:ablation_all} also confirm the efficacy across a large range of timesteps $t$ and frequencies.

\noindent\textbf{Computational cost comparison.}
As one of the representative one-step generation paradigms, Consistency Model~\cite{song2023consistency} achieves satisfactory performance while consuming 64 GPUs.
As a comparison, we report in \cref{tab:memory_cost,tab:time_cost} the FID performance, average iteration time, GPU memory, and number of used GPUs, respectively.
Note that we adopt the lazy regularization strategy in \method, where the diffusion model is involved only once per 8 iterations.
Hence, for most training steps, the GPU memory is not increased.
We further employ the trick to omit the U-Net Jacobian term for more efficient gradient optimization following~\citet{poole2023dreamfusion}, further alleviating the memory explosion while not sacrificing the performance.
We can tell that, our \method slightly increases the training cost but significantly improve the performance.

\noindent\textbf{Latent interpolation.}
Latent space interpolation is widely studied in the seminal literature~\cite{Brock2018LargeSG,Karras2018ASG,Karras2019AnalyzingAI}, which aims to verify the generative ability of the GANs.
It is well recognized that GANs possess semantically continuous and extremely smooth latent spaces~\cite{gansteerability,wu2021,Shen2019InterpretingTL}.
We demonstrate the results of latent interpolation within $\mathcal W$ space (\textit{i.e.}, the disentangled latent space) in \cref{fig:interp}, in which the observation coincides with the common conclusion.
As a comparison, we show the interpolation of Consistency Model (CM)~\cite{song2023consistency}, which is also a one-step synthesis paradigm.
As shown in \cref{fig:interp}, CM fails to synthesize correct results with interpolated latent codes, indicating the poor continuity of the latent space and thus the difficulty for editing and other downstream applications.

\noindent\textbf{Ablation study.}
We conduct comprehensive ablation studies to convey a direct and clear picture of the efficacy of the score matching regularity under different settings, as reported in \cref{tab:ablation_all}.
We can conclude that both too large (\textit{e.g.}, $t\in[225, 275]$) and too tiny (\textit{e.g.}, $t\in[5,15]$) timestep interval negatively influences the synthesis performance.
Besides, large $\lambda_{score}$ harms both FID and IS performance.
Finally, too frequent regularization also harms the performance while too infrequent regularity indicates inconspicuous improvements.
It is noteworthy that all experimental results coincide with the analysis in \cref{subsec:3.4}.
Empirical value of these hyper-parameters used in our experiments are listed in \cref{subsec:b.1}.

\vspace{3.5pt}

\subsection{Discussion}\label{subsec:4.5}

It is the gradient vanishing of GANs that restricts the downstream applications, leaving GANs lacking further research such as text-to-image synthesis.
Therefore, we believe \method is attached to great importance.
Despite the great success on facilitating GAN training, our proposed algorithm has several potential limitations.
As a supplemental regularity, its efficacy depends highly on the choice of the hyper-parameters.
Although we conduct extensive and convincing ablation studies and provide an empirically adequate solution, the optimality of such a strategy is currently unexplored.
Besides, due to additionally involving the score matching via noise prediction model, we introduce the lazy strategy to diminish time cost.
However, this still slightly increases training cost and slows down the training speed.
Therefore, how to further conquer this problem (\textit{e.g.}, employing a smaller-resolution diffusion model) will be an interesting avenue for future research.
Although failing to outperform DPMs, we hope that \method will encourage the community to close the gap in the future.

\section{Conclusion}\label{sec:conclusion}

In this paper, we analyze and alleviate gradient vanishing of GANs by delving into the mathematical foundation of GAN loss.
We theoretically point out a novel perspective to facilitate GAN training.
Drawing lessons from score matching, we propose \method, a plug-in algorithm which punishes gradient vanishing.
We provide a proof that score matching serving as a regularity provides supernumerary guidance enforcing out-of-data-manifold samples by generator towards data manifold.
Consequently, generator loss is more capable of guiding generator distribution to converge to data distribution.
We conduct comprehensive experiments to demonstrate significant improvement of synthesis quality on a variety of datasets and baseline models.

\section*{Acknowledgements}

This work was partially supported by Beijing Natural Science Foundation (L222008), the Natural Science Foundation of China (U2336214, 62332019, 62302297), Beijing Hospitals Authority Clinical Medicine Development of special funding support (ZLRK202330), Shanghai Sailing Program (22YF1420300), and Young Elite Scientists Sponsorship Program by CAST (2022QNRC001).

\section*{Impact Statement}

The proposed approach represents a significant advancement in the field of GAN synthesis, which is both a fundamental and practical problem.
The introduction of the score matching regularity from pre-trained DPMs has greatly enhanced the capacity for handling gradient vanishing, making GAN training more stable.
However, this technique may also facilitate the creation of fake content, such as Deepfake, which could have negative consequences.
We want to emphasize that we strongly oppose the misuse of this approach to violate security and privacy issues.
The negative impact of such misuse can be mitigated by the development of deep fake detection technology.

\bibliography{ref}
\bibliographystyle{icml2024}

\newpage
\appendix
\onecolumn
\unnumberedsection{Appendix}\label{sec:appendix}

\section{Proofs and derivations}

In this section, we will prove the theorems claimed in the main manuscript.
First, we emphasize a property in Riemann integral, which is attached great importance to the proofs and derivations in the sequel.

\begin{proposition}\label{prop:zero_integral}
Let $f$ be a Lebesgue-measurable function, and $Z$ be a set with zero Lebesgue measure.
The integral of $f$ on $Z$ is 0, \textit{i.e.},
\begin{align}
\int_Zf(\mathbf x)\mathrm d\mathbf x=0.
\end{align}
\end{proposition}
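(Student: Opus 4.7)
The plan is to reduce the statement to nonnegative integrands and then approximate by simple functions, where the claim is transparent. This is a standard measure-theoretic fact and does not really contain a hard step, so my sketch is essentially a textbook derivation organized into three stages, using tools (linearity, monotonicity of Lebesgue measure, monotone convergence) that are standard background not stated in the paper.

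First I would split $f = f^+ - f^-$ with $f^\pm := \max(\pm f, 0)$ nonnegative and Lebesgue-measurable. By linearity of the Lebesgue integral, it suffices to prove $\int_Z g\,\mathrm d\mathbf x = 0$ for every nonnegative Lebesgue-measurable $g$. Second, I would dispatch the simple-function case: if $g = \sum_{i=1}^n c_i \mathbf{1}_{E_i}$ with $c_i \geq 0$ and $E_i$ measurable, then
\begin{align*}
\int_Z g\,\mathrm d\mathbf x = \sum_{i=1}^n c_i\,\mu_d(E_i \cap Z) \leq \sum_{i=1}^n c_i\,\mu_d(Z) = 0,
\end{align*}
using monotonicity of $\mu_d$ together with the hypothesis $\mu_d(Z)=0$. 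Third, for a general nonnegative measurable $g$, I would choose a pointwise-increasing sequence of nonnegative simple $\phi_n \uparrow g$ (which exists by the standard approximation theorem) and apply the monotone convergence theorem to obtain $\int_Z g\,\mathrm d\mathbf x = \lim_{n\to\infty}\int_Z \phi_n\,\mathrm d\mathbf x = 0$.

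Combining these stages via $\int_Z f\,\mathrm d\mathbf x = \int_Z f^+\,\mathrm d\mathbf x - \int_Z f^-\,\mathrm d\mathbf x = 0 - 0 = 0$ finishes the argument. The one subtle point worth noting is that the proposition imposes no global integrability hypothesis on $f$, but this is harmless: the null-set integrals of $f^+$ and $f^-$ are each finite (in fact zero), so no $\infty-\infty$ ambiguity arises and the decomposition is meaningful. There is no genuine obstacle here; the only care required is invoking monotonicity, linearity, and monotone convergence at the correct stage of the reduction.
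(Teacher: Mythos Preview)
Your argument is correct and is the standard textbook derivation. The paper itself does not supply a proof of this proposition: it is stated as a background fact (introduced as ``a property in Riemann integral, which is attached great importance to the proofs and derivations in the sequel'') and then invoked without justification in the subsequent proofs of Theorems~3.1 and~3.2. So there is no paper proof to compare against; your three-stage reduction (split into $f^\pm$, simple functions via monotonicity of $\mu_d$, general nonnegative via monotone convergence) is exactly what one would write if asked to fill in the gap, and your remark that no $\infty-\infty$ ambiguity arises is the only point that actually needs a moment's thought.
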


\subsection{Proof of \Cref{thm:nongrad}}\label{subsec:a.1}

\begin{theorem}
Let $A,B\subset\mathbb R^d$ with positive Lebesgue measure, \textit{i.e.}, $\mu_d(A)>0,\mu_d(B)>0$.
Denote by $q_A(\mathbf x),q_B(\mathbf x)$ two distributions supported on $A,B$, respectively, \textit{i.e.}, $\mathrm{supp}\;q_A=\{\mathbf x\mid q_A(\mathbf x)\neq0\}=A$, $\mathrm{supp}\;q_B=B$.
Let $X\backslash Y=\{\mathbf x\mid\mathbf x\in X\text{ and }\mathbf x\notin Y\}$.
When $D$ reaches the optimality, and if $\mu_d(A\backslash B)>0$, then
\begin{align}
-\int q_A(\mathbf x)\log\frac{q_B(\mathbf x)}{q_A(\mathbf x)+q_B(\mathbf x)}\mathrm d\mathbf x=+\infty.
\end{align}
\end{theorem}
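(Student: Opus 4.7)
The plan is to exploit the fact that $A\setminus B$ carries positive Lebesgue measure while $q_B$ vanishes identically there, which forces the logarithmic integrand to blow up. Since the integrand turns out to be pointwise non-negative everywhere, no cancellation is possible and the full integral must diverge to $+\infty$.

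First I would verify the non-negativity: whenever $q_A(\mathbf{x})>0$ the ratio $\frac{q_B(\mathbf{x})}{q_A(\mathbf{x})+q_B(\mathbf{x})}$ lies in $[0,1]$, so its logarithm is non-positive and the integrand $-q_A\log\frac{q_B}{q_A+q_B}$ is non-negative. This lets me partition $\mathbb{R}^d$ into three disjoint pieces and lower-bound the total integral by any one of them: on $\mathbb{R}^d\setminus A$ the factor $q_A$ is zero and contributes nothing; on $A\cap B$ both densities are strictly positive by the support hypothesis, so the contribution is non-negative (possibly $+\infty$, but never negative); on $A\setminus B$ the divergence has to come from. Lower-bounding the whole integral by just the piece $A\setminus B$ is then enough.

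On $A\setminus B$ the definitions $\mathrm{supp}\;q_A=A$ and $\mathrm{supp}\;q_B=B$ force $q_A(\mathbf{x})>0$ and $q_B(\mathbf{x})=0$, so the integrand formally equals $-q_A(\mathbf{x})\log 0=+\infty$ on a set of positive measure. The only real technical step, and the main obstacle, is turning this formal statement into a rigorous divergence of the Lebesgue integral. I would do this via monotone convergence applied to the truncated integrand
\begin{align*}
h_\epsilon(\mathbf{x}) = -q_A(\mathbf{x})\log\frac{q_B(\mathbf{x})+\epsilon}{q_A(\mathbf{x})+q_B(\mathbf{x})+\epsilon},
\end{align*}
which is bounded, non-negative, and (by an elementary monotonicity check in $\epsilon$, since the ratio is strictly increasing in $\epsilon$) increases pointwise to the original integrand as $\epsilon\downarrow 0$. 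Restricting further to $E_k=\{\mathbf{x}\in A\setminus B:\,q_A(\mathbf{x})>1/k\}$, which exhausts $A\setminus B$ and therefore has positive measure for some $k$ by countable additivity, one gets $h_\epsilon(\mathbf{x})\geq \tfrac{1}{k}\log(1+\tfrac{1}{k\epsilon})$ on $E_k$, so $\int_{A\setminus B}h_\epsilon\,\mathrm{d}\mathbf{x}\geq \tfrac{\mu_d(E_k)}{k}\log(1+\tfrac{1}{k\epsilon})\to +\infty$ as $\epsilon\downarrow 0$. Combining with the non-negative contributions from $A\cap B$ and $\mathbb{R}^d\setminus A$, and invoking \Cref{prop:zero_integral} only implicitly to discard possible zero-measure boundary defects in the supports, yields $-\int q_A(\mathbf{x})\log\frac{q_B(\mathbf{x})}{q_A(\mathbf{x})+q_B(\mathbf{x})}\,\mathrm{d}\mathbf{x}=+\infty$ as claimed.
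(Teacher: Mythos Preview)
Your proof is correct and follows essentially the same approach as the paper: decompose the domain into $A\setminus B$, $A\cap B$, and the complement of $A$, observe that the integrand is non-negative on the latter two pieces, and conclude divergence from the $A\setminus B$ piece where $q_B\equiv 0$ on a set of positive measure. The paper simply asserts the last step directly, whereas you supply additional rigor via the $\epsilon$-regularization $h_\epsilon$ and the exhaustion by the sets $E_k$; one minor slip is that $h_\epsilon$ need not be \emph{bounded} if $q_A$ is unbounded, but this is irrelevant since only non-negativity is needed for the monotone convergence argument.
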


\begin{proof}
We first divide the union of $A$ and $B$ as below:
\begin{align}
A\cup B=(A\backslash B)\coprod(B\backslash A)\coprod(A\cap B),
\end{align}
where $\coprod$ represents the disjoint union.
Note that $q_A(\mathbf x)\log\frac{q_B(\mathbf x)}{q_A(\mathbf x)+q_B(\mathbf x)}=0$ for $\mathbf x\notin A\cup B$ and $\mathbf x\in B\backslash A$.
Therefore
\begin{align}
&-\int q_A(\mathbf x)\log\frac{q_B(\mathbf x)}{q_A(\mathbf x)+q_B(\mathbf x)}\mathrm d\mathbf x \\
=&-\int_{A\cup B} q_A(\mathbf x)\log\frac{q_B(\mathbf x)}{q_A(\mathbf x)+q_B(\mathbf x)}\mathrm d\mathbf x \\
=&-\int_{A\backslash B} q_A(\mathbf x)\log\frac{q_B(\mathbf x)}{q_A(\mathbf x)+q_B(\mathbf x)}\mathrm d\mathbf x\nonumber \\
&\qquad-\int_{B\backslash A} q_A(\mathbf x)\log\frac{q_B(\mathbf x)}{q_A(\mathbf x)+q_B(\mathbf x)}\mathrm d\mathbf x\nonumber \\
&\qquad-\int_{A\cap B} q_A(\mathbf x)\log\frac{q_B(\mathbf x)}{q_A(\mathbf x)+q_B(\mathbf x)}\mathrm d\mathbf x \\
\geqslant&-\int_{A\backslash B} q_A(\mathbf x)\log\frac{q_B(\mathbf x)}{q_A(\mathbf x)+q_B(\mathbf x)}\mathrm d\mathbf x+0+0 \\
=&+\infty\label{eq:infty},
\end{align}
where $-\int_{A\cap B} q_A(\mathbf x)\log\frac{q_B(\mathbf x)}{q_A(\mathbf x)+q_B(\mathbf x)}\mathrm d\mathbf x\geqslant-\sup_{\mathbf x\in A\cap B}\log\frac{q_B(\mathbf x)}{q_A(\mathbf x)+q_B(\mathbf x)}(\int_{A\cap B} q_A(\mathbf x)\mathrm d\mathbf x)\geqslant0$ is by the property of Riemann integral.
And \cref{eq:infty} is due to $\mu_d(A\backslash B)>0$ and $\frac{q_B(\mathbf x)}{q_A(\mathbf x)+q_B(\mathbf x)}=0$ on $A\backslash B$.
\end{proof}

\subsection{Proof of \Cref{thm:mode_collapse}}\label{subsec:a.2}

\begin{theorem}
Following the settings in \Cref{thm:nongrad}, when $D$ reaches the optimality, the following inequality reaches its optimal if and only if $\mu_d(A\backslash B)=\mu_d(B\backslash A)=0$, and $\mu_d(\{\mathbf x\mid q_A|_{A\cap B}(\mathbf x)\neq q_B|_{A\cap B}(\mathbf x)\})=0$.
\begin{align}
-\int q_A(\mathbf x)\log\frac{q_B(\mathbf x)}{q_A(\mathbf x)+q_B(\mathbf x)}\mathrm d\mathbf x&\geqslant\log2,
\end{align}
where
\begin{align}
f|_X(\mathbf x)=
\begin{cases}
&f(\mathbf x) \quad\text{if }\mathbf x\in X, \\
&0 \quad\text{otherwise.}
\end{cases}
\end{align}
\end{theorem}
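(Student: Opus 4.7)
The plan is to split on whether $\mu_d(A\backslash B)$ is positive and, in the non-trivial case, to reduce the integral to $A\cap B$ and apply a Jensen--Cauchy-Schwarz chain that tightly bounds the left-hand side by $\log 2$. In the easy case $\mu_d(A\backslash B)>0$, \Cref{thm:nongrad} immediately gives $F := -\int q_A\log\frac{q_B}{q_A+q_B}\,\mathrm d\mathbf x = +\infty\neq\log 2$, while simultaneously the condition $\mu_d(A\backslash B)=0$ on the right-hand side of the biconditional also fails; hence the ``iff'' holds trivially in this case.

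In the main case $\mu_d(A\backslash B)=0$, by \Cref{prop:zero_integral} the integral over $A\backslash B$ vanishes, and $q_A\equiv 0$ on $B\backslash A$, so $F=\int_{A\cap B} q_A\log\frac{q_A+q_B}{q_B}\,\mathrm d\mathbf x$ with $\int_{A\cap B} q_A\,\mathrm d\mathbf x = 1$. Setting $t=q_A/(q_A+q_B)\in(0,1)$ on $A\cap B$, Jensen's inequality applied to the convex function $-\log(1-t)$ against the probability measure $q_A\,\mathrm d\mathbf x$ on $A\cap B$ gives $F\geq -\log\!\bigl(1-\int_{A\cap B} q_A t\,\mathrm d\mathbf x\bigr)$. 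Cauchy--Schwarz with $f=q_A/\sqrt{q_A+q_B}$ and $g=\sqrt{q_A+q_B}$ on $A\cap B$ yields $1=\bigl(\int_{A\cap B} q_A\bigr)^2\leq\bigl(\int_{A\cap B} q_A t\,\mathrm d\mathbf x\bigr)\bigl(\int_{A\cap B}(q_A+q_B)\,\mathrm d\mathbf x\bigr)$; since $\int_{A\cap B}(q_A+q_B)\leq 2$, this gives $\int_{A\cap B} q_A t\geq 1/2$ and hence $F\geq\log 2$.

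For the equality analysis, equality in Jensen forces $t\equiv t_0$ $q_A$-a.e.\ on $A\cap B$; substituting back, $F=-\log(1-t_0)=\log 2$ forces $t_0=1/2$, i.e.\ $q_A=q_B$ a.e.\ on $A\cap B$, which is the second stated condition. From $\int_{A\cap B}q_A=1$ and $q_A=q_B$ a.e.\ there, we obtain $\int_{A\cap B}q_B=1$, so $\int_{B\backslash A}q_B=0$; since $q_B>0$ on $B$ by definition of support, this forces $\mu_d(B\backslash A)=0$. The sufficiency direction is immediate: if all three conditions hold then $q_B/(q_A+q_B)=1/2$ holds $q_A$-a.e., whence $F=\log 2$ by direct computation.

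The main obstacle I anticipate is producing a lower bound whose equality case aligns exactly with the stated characterization. Jensen alone only bounds $F$ by $-\log(1-\int q_A t)$ for an unspecified mean value; combining it with Cauchy--Schwarz and the normalization $\int_{A\cap B} q_A=1$ is what converts the chain into the sharp constant $\log 2$. One also has to be careful that $\mu_d(B\backslash A)=0$ is invisible to the integrand itself (which vanishes on $B\backslash A$ because $q_A=0$ there), so this condition must be recovered \emph{a posteriori} from the probability normalization of $q_B$ once $q_A=q_B$ a.e.\ on $A\cap B$ has been established.
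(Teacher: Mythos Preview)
Your proof is correct and takes a genuinely different route from the paper's.

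The paper argues the necessity direction by contradiction: after reducing to $\mu_d(A\backslash B)=0$, it supposes the ``bad set'' $Z=\{q_A|_{A\cap B}\neq q_B|_{A\cap B}\}$ has positive measure, renormalizes $q_A,q_B$ to probability measures on $Z$, and then invokes the classical GAN-loss lower bound (essentially Jensen applied to the Jensen--Shannon form) on those renormalized measures to force $F>\log 2$; a further separate argument handles $\mu_d(B\backslash A)$. By contrast, you never split on $Z$: your Jensen step on the strictly convex map $t\mapsto -\log(1-t)$, combined with the Cauchy--Schwarz bound $\int_{A\cap B} q_A\,\frac{q_A}{q_A+q_B}\geq 1/\int_{A\cap B}(q_A+q_B)\geq 1/2$, yields the sharp constant $\log 2$ in one stroke and identifies the equality case ($t\equiv 1/2$, i.e.\ $q_A=q_B$ a.e.\ on $A\cap B$) directly from strict convexity and strict monotonicity. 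The remaining condition $\mu_d(B\backslash A)=0$ then drops out of the normalization of $q_B$, exactly as you note. Your argument is more self-contained (it does not cite the standard GAN lower bound as a black box) and the equality analysis is cleaner; the paper's approach is closer in spirit to the original Goodfellow argument and makes the role of the symmetric difference more explicit through its case split, at the cost of a longer chain of measure-theoretic bookkeeping.
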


\begin{proof}
We first divide the union of $A$ and $B$ as below:
\begin{align}
A\cup B=(A\backslash B)\coprod(B\backslash A)\coprod(A\cap B),
\end{align}
where $\coprod$ represents the disjoint union.
Then one can divide the integral into three parts:
\begin{align}
-\int q_A(\mathbf x)\log\frac{q_B(\mathbf x)}{q_A(\mathbf x)+q_B(\mathbf x)}\mathrm d\mathbf x
&=-\int_{A\backslash B} q_A(\mathbf x)\log\frac{q_B(\mathbf x)}{q_A(\mathbf x)+q_B(\mathbf x)}\mathrm d\mathbf x \\
&\qquad-\int_{B\backslash A} q_A(\mathbf x)\log\frac{q_B(\mathbf x)}{q_A(\mathbf x)+q_B(\mathbf x)}\mathrm d\mathbf x \\
&\qquad-\int_{A\cap B} q_A(\mathbf x)\log\frac{q_B(\mathbf x)}{q_A(\mathbf x)+q_B(\mathbf x)}\mathrm d\mathbf x.
\end{align}

When $\mu_d(A\backslash B)=\mu_d(B\backslash A)=0$, by \Cref{prop:zero_integral}, we have
\begin{align}
-\int_{A\backslash B} q_A(\mathbf x)\log\frac{q_B(\mathbf x)}{q_A(\mathbf x)+q_B(\mathbf x)}\mathrm d\mathbf x&=0 \\
-\int_{B\backslash A} q_A(\mathbf x)\log\frac{q_B(\mathbf x)}{q_A(\mathbf x)+q_B(\mathbf x)}\mathrm d\mathbf x&=0.
\end{align}
Let $Z=\{\mathbf x\mid q_A|_{A\cap B}(\mathbf x)\neq q_B|_{A\cap B}(\mathbf x)\}$.
Note that $Z\subseteq A\cap B$, since $q_A|_{A\cap B}=q_B|_{A\cap B}\equiv0$ outside $A\cap B$.
When $\mu_d(Z)=0$, then we have
\begin{align}
&-\int_{A\cap B} q_A(\mathbf x)\log\frac{q_B(\mathbf x)}{q_A(\mathbf x)+q_B(\mathbf x)}\mathrm d\mathbf x \\
=&-\int_{(A\cap B)\backslash Z} q_A(\mathbf x)\log\frac{1}{2}\mathrm d\mathbf x-\int_Z q_A(\mathbf x)\log\frac{q_B(\mathbf x)}{q_A(\mathbf x)+q_B(\mathbf x)}\mathrm d\mathbf x-\int_Z q_A(\mathbf x)\log\frac{1}{2}\mathrm d\mathbf x\label{eq:thm2_zero_integral_1} \\
=&-\int_{A\cap B} q_A(\mathbf x)\log\frac{1}{2}\mathrm d\mathbf x+0 \\
=&-\int_{A\cap B} q_A(\mathbf x)\log\frac{1}{2}\mathrm d\mathbf x-\int_{A\backslash B} q_A(\mathbf x)\log\frac{1}{2}\mathrm d\mathbf x\label{eq:thm2_zero_integral_2} \\
=&-\int_A q_A(\mathbf x)\log\frac{1}{2}\mathrm d\mathbf x=\log2,
\end{align}
where \cref{eq:thm2_zero_integral_1,eq:thm2_zero_integral_2} are due to $\mu_d(Z)=\mu_d(A\backslash B)=0$ and \Cref{prop:zero_integral}.

On the other hand, when the inequality reaches its minimum, by the definition of the support set, we have $q_A|_{B\backslash A}\equiv0$, and $q_B|_{A\backslash B}\equiv0$.
Therefore, we have $\frac{q_B}{q_A+q_B}|_{A\backslash B}\equiv0$, and
\begin{align}
-\int_{B\backslash A} q_A(\mathbf x)\log\frac{q_B(\mathbf x)}{q_A(\mathbf x)+q_B(\mathbf x)}\mathrm d\mathbf x=0.
\end{align}
If $\mu_d(A\backslash B)>0$, then
\begin{align}
-\int_{A\backslash B} q_A(\mathbf x)\log\frac{q_B(\mathbf x)}{q_A(\mathbf x)+q_B(\mathbf x)}\mathrm d\mathbf x=+\infty,
\end{align}
which contradicts with the optimality.
Hence we prove that $\mu_d(A\backslash B)=0$.

Let $Z=\{\mathbf x\mid q_A|_{A\cap B}(\mathbf x)\neq q_B|_{A\cap B}(\mathbf x)\}$.
Then $Z\subseteq A\cap B$ since $q_A|_{A\cap B}=q_B|_{A\cap B}\equiv0$ outside $A\cap B$.
We can then deduce that
\begin{align}
&-\int q_A(\mathbf x)\log\frac{q_B(\mathbf x)}{q_A(\mathbf x)+q_B(\mathbf x)}\mathrm d\mathbf x \\
=&-\int_{A\cap B} q_A(\mathbf x)\log\frac{q_B(\mathbf x)}{q_A(\mathbf x)+q_B(\mathbf x)}\mathrm d\mathbf x-\int_{A\backslash B} q_A(\mathbf x)\log\frac{q_B(\mathbf x)}{q_A(\mathbf x)+q_B(\mathbf x)}\mathrm d\mathbf x \\
=&-\int_{A\cap B} q_A(\mathbf x)\log\frac{q_B(\mathbf x)}{q_A(\mathbf x)+q_B(\mathbf x)}\mathrm d\mathbf x\label{eq:thm2_zero_integral_3} \\
=&-\int_{(A\cap B)\backslash Z} q_A(\mathbf x)\log\frac{q_B(\mathbf x)}{q_A(\mathbf x)+q_B(\mathbf x)}\mathrm d\mathbf x-\int_Z q_A(\mathbf x)\log\frac{q_B(\mathbf x)}{q_A(\mathbf x)+q_B(\mathbf x)}\mathrm d\mathbf x \\
=&\log2\left(\int_{(A\cap B)\backslash Z} q_A(\mathbf x)\mathrm d\mathbf x\right)-\int_Z q_A(\mathbf x)\log\frac{q_B(\mathbf x)}{q_A(\mathbf x)+q_B(\mathbf x)}\mathrm d\mathbf x \\
=&\log2\left(\int_{(A\cap B)\backslash Z} q_A(\mathbf x)\mathrm d\mathbf x+\int_{(A\backslash B)\backslash Z} q_A(\mathbf x)\mathrm d\mathbf x\right)-\int_Z q_A(\mathbf x)\log\frac{q_B(\mathbf x)}{q_A(\mathbf x)+q_B(\mathbf x)}\mathrm d\mathbf x\label{eq:thm2_zero_integral_4} \\
=&\log2\left(\int_{A\backslash Z} q_A(\mathbf x)\mathrm d\mathbf x\right)-\int_Z q_A(\mathbf x)\log\frac{q_B(\mathbf x)}{q_A(\mathbf x)+q_B(\mathbf x)}\mathrm d\mathbf x,
\end{align}
where \cref{eq:thm2_zero_integral_3,eq:thm2_zero_integral_4} are due to $\mu_d((A\backslash B)\backslash Z)\leqslant\mu_d(A\backslash B)=0$ and \Cref{prop:zero_integral}.
Suppose $\mu_d(Z)>0$, then $C_A=\int_Zq_A(\mathbf x)\mathrm d\mathbf x>0,C_B=\int_Zq_B(\mathbf x)\mathrm d\mathbf x>0$.
By the definition of $Z$ and $\mu_d(A\backslash B)=0$, we have
\begin{align}
C_A&=\int_Zq_A(\mathbf x)\mathrm d\mathbf x+\left(\int_{(A\cap B)\backslash Z}q_A(\mathbf x)\mathrm d\mathbf x-\int_{(A\cap B)\backslash Z}q_B(\mathbf x)\mathrm d\mathbf x\right)+\int_{A\backslash B}q_A(\mathbf x)\mathrm d\mathbf x-\int_{A\backslash B}q_B(\mathbf x)\mathrm d\mathbf x\label{eq:thm2_zero_integral_5} \\
&=\int_{A\cap B}q_A(\mathbf x)\mathrm d\mathbf x-\int_{(A\cap B)\backslash Z}q_B(\mathbf x)\mathrm d\mathbf x+\int_{A\backslash B}q_A(\mathbf x)\mathrm d\mathbf x-\int_{A\backslash B}q_B(\mathbf x)\mathrm d\mathbf x \\
&=\int_Aq_A(\mathbf x)\mathrm d\mathbf x-\int_{(A\cap B)\backslash Z}q_B(\mathbf x)\mathrm d\mathbf x-\int_{A\backslash B}q_B(\mathbf x)\mathrm d\mathbf x \\
&\geqslant\int_Aq_B(\mathbf x)\mathrm d\mathbf x-\int_{(A\cap B)\backslash Z}q_B(\mathbf x)\mathrm d\mathbf x-\int_{A\backslash B}q_B(\mathbf x)\mathrm d\mathbf x \\
&=\int_Zq_B(\mathbf x)\mathrm d\mathbf x=C_B,
\end{align}
where \cref{eq:thm2_zero_integral_5} is due to $q_A=q_B$ on $(A\cap B)\backslash Z$, and $\mu_d(A\backslash B)=0$ with \Cref{prop:zero_integral}.

Then we have
\begin{align}
&-\int_Z q_A(\mathbf x)\log\frac{q_B(\mathbf x)}{q_A(\mathbf x)+q_B(\mathbf x)}\mathrm d\mathbf x \\
=&-C_A\int_Z\frac{1}{C_A}q_A(\mathbf x)\log\frac{\frac{1}{C_A}q_B(\mathbf x)}{\frac{1}{C_A}q_A(\mathbf x)+\frac{1}{C_A}q_B(\mathbf x)}\mathrm d\mathbf x \\
\geqslant&-C_A\int_Z\frac{1}{C_A}q_A(\mathbf x)\log\frac{\frac{1}{C_B}q_B(\mathbf x)}{\frac{1}{C_A}q_A(\mathbf x)+\frac{1}{C_B}q_B(\mathbf x)}\mathrm d\mathbf x.
\end{align}
Note that $\int_Z\frac{1}{C_A}q_A(\mathbf x)\mathrm d\mathbf x=\int_Z\frac{1}{C_B}q_B(\mathbf x)\mathrm d\mathbf x=1$, one can rewrite $q_A'=\frac{1}{C_A}q_A,q_B'=\frac{1}{C_B}q_B,$ then we have
\begin{align}
&-C_A\int_Z\frac{1}{C_A}q_A(\mathbf x)\log\frac{\frac{1}{C_B}q_B(\mathbf x)}{\frac{1}{C_A}q_A(\mathbf x)+\frac{1}{C_B}q_B(\mathbf x)}\mathrm d\mathbf x \\
=&-C_A\int_Zq_A'(\mathbf x)\log\frac{q_B'(\mathbf x)}{q_A'(\mathbf x)+q_B'(\mathbf x)}\mathrm d\mathbf x>C_A\log2 \label{eq:gan_ineq},
\end{align}
where \cref{eq:gan_ineq} is due to the property of generator loss on two distinct nonzero distributions.
Therefore, we have the contradiction:
\begin{align}
&-\int q_A(\mathbf x)\log\frac{q_B(\mathbf x)}{q_A(\mathbf x)+q_B(\mathbf x)}\mathrm d\mathbf x \\
=&\log2\left(\int_{A\backslash Z} q_A(\mathbf x)\mathrm d\mathbf x\right)-\int_Z q_A(\mathbf x)\log\frac{q_B(\mathbf x)}{q_A(\mathbf x)+q_B(\mathbf x)}\mathrm d\mathbf x \\
>&\log2\left(\int_{A\backslash Z} q_A(\mathbf x)\mathrm d\mathbf x\right)+C_A\log2 \\
=&\log2\left(\int_{A\backslash Z} q_A(\mathbf x)\mathrm d\mathbf x\right)+\log2\left(\int_Zq_A(\mathbf x)\mathrm d\mathbf x\right) \\
=&\log2\left(\int_A q_A(\mathbf x)\mathrm d\mathbf x\right)=\log2,
\end{align}
which indicates that $\mu_d(Z)=0$.

Finally, it suffices to show $\mu_d(B\backslash A)=0$.
If $\mu_d(B\backslash A)>0$, then $\int_{B\backslash A}q_B(\mathbf x)\mathrm d\mathbf x>0$
\begin{align}
C_B&=\int_Aq_B(\mathbf x)\mathrm d\mathbf x-\int_{(A\cap B)\backslash Z}q_B(\mathbf x)\mathrm d\mathbf x-\int_{A\backslash B}q_B(\mathbf x)\mathrm d\mathbf x \\
&=\left(\int_{A\backslash B}q_B(\mathbf x)\mathrm d\mathbf x+\int_{A\cap B}q_B(\mathbf x)\mathrm d\mathbf x\right)-\int_{(A\cap B)\backslash Z}q_B(\mathbf x)\mathrm d\mathbf x-\int_{A\backslash B}q_B(\mathbf x)\mathrm d\mathbf x \\
&=\int_{A\cap B}q_B(\mathbf x)\mathrm d\mathbf x-\int_{(A\cap B)\backslash Z}q_B(\mathbf x)\mathrm d\mathbf x-\int_{A\backslash B}q_B(\mathbf x)\mathrm d\mathbf x\label{eq:thm2_zero_integral_6} \\
&=\left(\int_Bq_B(\mathbf x)\mathrm d\mathbf x-\int_{B\backslash A}q_B(\mathbf x)\mathrm d\mathbf x\right)-\int_{(A\cap B)\backslash Z}q_B(\mathbf x)\mathrm d\mathbf x-\int_{A\backslash B}q_B(\mathbf x)\mathrm d\mathbf x \\
&<\int_Bq_B(\mathbf x)\mathrm d\mathbf x-\int_{(A\cap B)\backslash Z}q_B(\mathbf x)\mathrm d\mathbf x-\int_{A\backslash B}q_B(\mathbf x)\mathrm d\mathbf x \\
&=\int_Aq_A(\mathbf x)\mathrm d\mathbf x-\int_{(A\cap B)\backslash Z}q_B(\mathbf x)\mathrm d\mathbf x-\int_{A\backslash B}q_B(\mathbf x)\mathrm d\mathbf x \\
&=\int_Aq_A(\mathbf x)\mathrm d\mathbf x-\int_{(A\cap B)\backslash Z}q_A(\mathbf x)\mathrm d\mathbf x=C_A\label{eq:thm2_zero_integral_7},
\end{align}
where \cref{eq:thm2_zero_integral_6,eq:thm2_zero_integral_7} is due to $\mu_d(A\backslash B)=0$ with \Cref{prop:zero_integral} and $q_A=q_B$ on $(A\cap B)\backslash Z$.
Then we have
\begin{align}
&-\int_Z q_A(\mathbf x)\log\frac{q_B(\mathbf x)}{q_A(\mathbf x)+q_B(\mathbf x)}\mathrm d\mathbf x \\
>&-C_A\int_Z\frac{1}{C_A}q_A(\mathbf x)\log\frac{\frac{1}{C_B}q_B(\mathbf x)}{\frac{1}{C_A}q_A(\mathbf x)+\frac{1}{C_B}q_B(\mathbf x)}\mathrm d\mathbf x \\
=&-C_A\int_Zq_A'(\mathbf x)\log\frac{q_B'(\mathbf x)}{q_A'(\mathbf x)+q_B'(\mathbf x)}\mathrm d\mathbf x=C_A\log2,
\end{align}
and $-\int q_A(\mathbf x)\log\frac{q_B(\mathbf x)}{q_A(\mathbf x)+q_B(\mathbf x)}\mathrm d\mathbf x>\log2\left(\int_{A\backslash Z} q_A(\mathbf x)\mathrm d\mathbf x\right)+C_A\log2=\log2$.
Therefore $\mu_d(B\backslash A)=0$.
\end{proof}

\subsection{Proof of \Cref{thm:score_matching}}\label{subsec:a.3}

\begin{theorem}
Denote by $\mathrm{dist}(\mathbf x)$ the distance between $\mathbf x$ and $\mathrm{supp}\;q_0$.
For any $\mathbf y\notin\mathrm{supp}\;q_0$, define a sequence of random variable $\mathbf y_0=\mathbf y$, $\mathbf y_{k+1}=\mathcal R(\mathbf y_k,\boldsymbol\epsilon_k,t)$ with $\boldsymbol\epsilon_k\sim\mathcal N(\mathbf 0,\mathbf I)$.
Then the sequence $\{\mathbf y_k\}_{k=0}^{\infty}$ converges to $\mathrm{supp}\;q_0$, \textit{i.e.},
\begin{align}
\lim_{k\rightarrow+\infty,t\rightarrow0}\mathrm{dist}(\mathbf y_k)=0.
\end{align}
\end{theorem}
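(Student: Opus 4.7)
The plan is to interpret the refinement operator $\mathcal R$ as a single step of Langevin dynamics targeting the noisy data distribution $q_t$, then push $t\to 0$ so that $q_t\to q_0$ and invoke the classical Langevin convergence result of Welling \& Teh. Concretely, I would first use the denoising score-matching identity so that the optimal noise predictor satisfies $\boldsymbol\epsilon_\theta(\tilde{\mathbf x}_t,t)=-\sigma_t\nabla_{\tilde{\mathbf x}_t}\log q_t(\tilde{\mathbf x}_t)$ with $\tilde{\mathbf x}_t=\alpha_t\mathbf y_k+\sigma_t\boldsymbol\epsilon_k$. Substituting into \cref{eq:refine} rewrites the update as
\begin{align}
\mathbf y_{k+1}=\mathbf y_k+\frac{\sigma_t^2}{\alpha_t}\nabla\log q_t(\tilde{\mathbf x}_t)+\frac{\sigma_t}{\alpha_t}\boldsymbol\epsilon_k,
\end{align}
which, as $t\to 0$, has $\alpha_t\to 1$ and $\sigma_t\to 0$, so it is exactly a Langevin step with step size $\eta_t=\sigma_t^2$ and isotropic noise of variance $\sigma_t^2$, evaluated at the perturbed point $\tilde{\mathbf x}_t$ rather than $\mathbf y_k$. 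A short Taylor expansion of $\nabla\log q_t$ around $\mathbf y_k$ shows that the discrepancy between evaluating the score at $\tilde{\mathbf x}_t$ and at $\mathbf y_k$ is $O(\sigma_t)$ in expectation, hence vanishes in the limit.

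Next I would invoke the standard result that Langevin dynamics with step size $\eta_t\to 0$ and infinitely many iterations produces a chain whose law converges weakly to the stationary distribution, here $q_t$. Combining this with the fact that $q_t\Rightarrow q_0$ as $t\to 0$ (by \cref{eq:tran} with $\alpha_t\to 1,\sigma_t\to 0$), we obtain that the law of $\mathbf y_k$ converges to a distribution supported on $\mathrm{supp}\;q_0$. Since $\mathrm{dist}(\cdot,\mathrm{supp}\;q_0)$ is a continuous nonnegative function vanishing on $\mathrm{supp}\;q_0$, the continuous mapping theorem yields $\mathrm{dist}(\mathbf y_k)\to 0$ in probability, which is the statement of the theorem (interpreting $\mathbf y_k$ as a random variable since each $\boldsymbol\epsilon_k$ is Gaussian).

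\textbf{Main obstacle.} The delicate part is handling the double limit $k\to\infty,\,t\to 0$ simultaneously rather than iteratively: one must ensure that the mixing time of the Langevin chain does not blow up faster than the rate at which $\sigma_t\to 0$. A clean way around this is to argue along an appropriate joint sequence $(k_n,t_n)$ with $k_n\to\infty$ fast enough relative to $\sigma_{t_n}^{-2}$ so that the Langevin chain at noise level $t_n$ has effectively mixed; standard log-Sobolev or Poincar\'e arguments for $q_t$ provide quantitative mixing rates that make this precise. The only other subtle point is that, at small but positive $t$, the stationary distribution $q_t$ has full-dimensional support (because of the added Gaussian), so technically $\mathbf y_k$ is not being pushed onto $\mathrm{supp}\;q_0$ for any fixed $t>0$; this is precisely why the $t\to 0$ limit is taken, and I would emphasize that the resulting convergence is in distribution (and hence for the continuous functional $\mathrm{dist}$) rather than almost sure.
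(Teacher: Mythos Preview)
Your proposal is essentially correct and follows the same skeleton as the paper: both proofs begin with the denoising score identity $\boldsymbol\epsilon_\theta(\alpha_t\mathbf x+\sigma_t\boldsymbol\epsilon,t)=-\sigma_t\nabla\log q_t(\alpha_t\mathbf x+\sigma_t\boldsymbol\epsilon)$ and rewrite the refinement as
\[
\mathcal R(\mathbf x,\boldsymbol\epsilon,t)=\mathbf x+\frac{\sigma_t^2}{\alpha_t}\nabla\log q_t(\alpha_t\mathbf x+\sigma_t\boldsymbol\epsilon)+\frac{\sigma_t}{\alpha_t}\boldsymbol\epsilon,
\]
then argue that in the joint limit $k\to\infty,\,t\to 0$ the iterates are driven onto $\mathrm{supp}\;q_0$.

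Where you differ is in the dynamical interpretation. The paper's appendix proof identifies $\mathcal R$ as a discretization of the \emph{reverse-time SDE} and simply states that the double limit ``indicates the continuous version of this SDE and cancels the discretization error.'' You instead interpret $\mathcal R$ as a step of \emph{Langevin dynamics at fixed noise level $t$} targeting $q_t$, and then let $q_t\Rightarrow q_0$. Your framing is arguably the more natural one for the theorem as stated, since the iteration is at a fixed $t$ (the reverse SDE reading would require $t$ to decrease along the chain), and it matches the Welling--Teh citation the paper gives in the main text. You are also considerably more explicit about the analytic obstacles---the $O(\sigma_t)$ perturbation from evaluating the score at $\tilde{\mathbf x}_t$ rather than $\mathbf y_k$, the interchange of the $k$ and $t$ limits, and the need for quantitative mixing---all of which the paper's proof leaves implicit. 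In short: same core identity and same conclusion, with your Langevin route being a somewhat more careful version of what the paper sketches.
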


\begin{proof}
Note that $\boldsymbol\epsilon_\theta(\alpha_t\mathbf x+\sigma_t\boldsymbol\epsilon,t)=-\sigma_t\nabla\log q_t(\mathbf x_t)$, therefore we can rewrite the one-step refinement $\mathcal R$ as below:
\begin{align}
\mathcal R(\mathbf x,\boldsymbol\epsilon,t)&=\mathbf x+\frac{\sigma_t}{\alpha_t}(\boldsymbol\epsilon-\boldsymbol\epsilon_\theta(\alpha_t\mathbf x+\sigma_t\boldsymbol\epsilon,t)) \\
&=\mathbf x+\frac{\sigma_t^2}{\alpha_t}\nabla\log q_t(\alpha_t\mathbf x+\sigma_t\boldsymbol\epsilon)+\frac{\sigma_t}{\alpha_t}\boldsymbol\epsilon
\end{align}

Recall the corresponding SDE of the reverse process of DPMs
\begin{align}\label{eq:reverse_sde}
\mathrm d\mathbf x_t=f(t)\mathbf x_t\mathrm dt-g^2(t)\nabla_{\mathbf x_t}\log q_t(\mathbf x_t)\mathrm dt+g(t)\mathrm d\mathbf w.
\end{align}
One can refer to $\mathcal R$ as a discretization of \cref{eq:reverse_sde}.
Then the conclusion comes directly as a deduction of the solution to this SDE in \cref{eq:reverse_sde}, since the limit for $k\rightarrow+\infty,t\rightarrow 0$ indicates the continuous version of this SDE and cancels the discretization error.
\end{proof}

\begin{remark}
As \Cref{thm:score_matching} concludes, the sequence of refined results will converge to locate at the support of the data distribution $\mathrm{supp}\;q_0$.
By the Cauchy's convergence law, we claim that for arbitrarily small $\varepsilon>0$, there exists $K>0$ such that for any $k>K$, we have $\|\mathbf y_{k+1}-\mathbf y_k\|_2<\varepsilon$.
This indicates that there will be no gradient when synthesized samples support on the data manifold.
Otherwise, the gradient of nonzero $\|\mathbf y_{k+1}-\mathbf y_k\|_2$ will enforce the convergence of the refinement sequence towards the data manifold.
\end{remark}

\subsection{Proof of \Cref{thm:cdscore_matching}}\label{subsec:a.4}

Before addressing the feasibility theorem under conditional generation setting, we first define the conditional one-step refinement as below
\begin{align}\label{eq:cdrefine}
\mathcal R(\mathbf x,\boldsymbol\epsilon,c,t):=\mathbf x+\frac{\sigma_t}{\alpha_t}(\boldsymbol\epsilon-\boldsymbol\epsilon_\theta(\alpha_t\mathbf x+\sigma_t\boldsymbol\epsilon,c,t)).
\end{align}

\begin{theorem}\label{thm:cdscore_matching}
Denote by $\mathrm{dist}(\mathbf x,c)$ the distance between pair $(\mathbf x,c)$ and $\mathrm{supp}\;q_0$.
For any pair $(\mathbf y,c)\notin\mathrm{supp}\;q_0$, define a sequence of random variable $\mathbf y_0=\mathbf y$, $\mathbf y_{k+1}=\mathcal R(\mathbf y_k,\boldsymbol\epsilon_k,c,t)$ with $\boldsymbol\epsilon_k\sim\mathcal N(\mathbf 0,\mathbf I)$.
Then the sequence $\{(\mathbf y_k,c)\}_{k=0}^{\infty}$ converges to $\mathrm{supp}\;q_0$, \textit{i.e.},
\begin{align}
\lim_{k\rightarrow+\infty,t\rightarrow0}\mathrm{dist}(\mathbf y_k,c)=0.
\end{align}
\end{theorem}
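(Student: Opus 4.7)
The plan is to reduce the conditional case to the unconditional \Cref{thm:score_matching} by freezing the condition $c$ and interpreting the dynamics as reverse diffusion with respect to the conditional distribution $q_0(\cdot \mid c)$. Since the refinement $\mathcal R(\mathbf x, \boldsymbol\epsilon, c, t)$ defined in \cref{eq:cdrefine} updates only the data coordinate while leaving $c$ fixed throughout the iteration, the entire sequence $\{\mathbf y_k\}_{k=0}^{\infty}$ evolves within the slice of the ambient space at the given condition, so it suffices to establish convergence of $\mathbf y_k$ to $\mathrm{supp}\;q_0(\cdot \mid c)$.

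First, I would rewrite the conditional refinement using the identity $\boldsymbol\epsilon_\theta(\alpha_t\mathbf x + \sigma_t\boldsymbol\epsilon, c, t) = -\sigma_t \nabla \log q_t(\alpha_t\mathbf x + \sigma_t\boldsymbol\epsilon \mid c)$, which holds at optimality of the conditional denoising score matching objective in \cref{eq:cdiffusion_loss}. This transforms \cref{eq:cdrefine} into
\begin{align}
\mathcal R(\mathbf x, \boldsymbol\epsilon, c, t) = \mathbf x + \frac{\sigma_t^2}{\alpha_t}\nabla \log q_t(\alpha_t\mathbf x + \sigma_t\boldsymbol\epsilon \mid c) + \frac{\sigma_t}{\alpha_t}\boldsymbol\epsilon,
\end{align}
which is precisely an Euler--Maruyama discretization of the conditional reverse SDE
\begin{align}
\mathrm d\mathbf x_t = f(t)\mathbf x_t \mathrm dt - g^2(t) \nabla_{\mathbf x_t} \log q_t(\mathbf x_t \mid c) \mathrm dt + g(t)\mathrm d\mathbf w,
\end{align}
whose marginal at $t = 0$ coincides with $q_0(\cdot \mid c)$. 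The SDE-based argument in the proof of \Cref{thm:score_matching} then applies verbatim with this new target measure: as $k \to +\infty$ and $t \to 0$, the discretization error is annihilated and $\mathbf y_k$ is driven onto $\mathrm{supp}\;q_0(\cdot \mid c)$.

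Finally, I would translate convergence along the slice into convergence to the joint support. Noting that $(\mathbf y, c) \in \mathrm{supp}\;q_0$ holds if and only if $\mathbf y \in \mathrm{supp}\;q_0(\cdot \mid c)$ whenever $c$ lies in the marginal support, the distance $\mathrm{dist}(\mathbf y_k, c)$ reduces to the distance from $\mathbf y_k$ to this conditional slice, yielding $\lim_{k \to +\infty,\, t \to 0} \mathrm{dist}(\mathbf y_k, c) = 0$ as desired.

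The main obstacle I anticipate is the same one implicitly handled in \Cref{thm:score_matching}: establishing uniform control of the discretization error as $t \to 0$ and the regularity of the conditional score function. One must restrict attention to conditions $c$ of positive marginal density so that $q_0(\cdot \mid c)$ is well-defined through a proper disintegration, and rely on the continuous-time convergence of the conditional reverse SDE. Once conditional score regularity is granted, the remainder is a direct parallel of \Cref{thm:score_matching}, since conditioning on $c$ preserves the Langevin structure and alters only the target measure.
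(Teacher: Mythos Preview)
Your proposal is correct and follows essentially the same approach as the paper: both reduce the conditional statement to \Cref{thm:score_matching} by freezing $c$, identifying the conditional refinement with the unconditional one driven by the conditional score of $q_0(\cdot\mid c)$, and then translating convergence to $\mathrm{supp}\;q_0(\cdot\mid c)$ back to convergence of the pair $(\mathbf y_k,c)$ to $\mathrm{supp}\;q_0$. The paper's version is terser---it simply renames $\boldsymbol\epsilon_\theta^c(\cdot,t):=\boldsymbol\epsilon_\theta(\cdot,c,t)$ and $\mathrm{dist}^c:=\mathrm{dist}(\cdot,c)$ and invokes \Cref{thm:score_matching} as a black box---whereas you re-expand the SDE discretization explicitly, but the substance is the same.
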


\begin{proof}
Denote by $q_0^c(\mathbf x)=q_0(\mathbf x, c)/q(c)=q_0(\mathbf x|c)$, and by $\boldsymbol\epsilon_\theta^c(\mathbf x_t,t)=\boldsymbol\epsilon_\theta(\mathbf x_t,c,t)$ for any condition $c$, one can refer to $\boldsymbol\epsilon_\theta^c$ as the ground-truth noise prediction model pre-trained on the data distribution $q_0^c(\mathbf x)$.
Denote by $\mathcal R^c(\mathbf x,\boldsymbol\epsilon,t)=\mathcal R(\mathbf x,\boldsymbol\epsilon,c,t)$ the refinement involving $\boldsymbol\epsilon_\theta^c$, and by $\mathrm{dist}^c(\mathbf x)=\mathrm{dist}(\mathbf x,c)$.
Then by \Cref{thm:score_matching}, one can conclude that for any $\mathbf y\notin\mathrm{supp}\;q_0^c$, and a sequence of random variable $\mathbf y_0=\mathbf y$, $\mathbf y_{k+1}=\mathcal R^c(\mathbf y_k,\boldsymbol\epsilon_k,t)$ with $\boldsymbol\epsilon_k\sim\mathcal N(\mathbf 0,\mathbf I)$, we have
\begin{align}
\lim_{k\rightarrow+\infty,t\rightarrow0}\mathrm{dist}(\mathbf y_k,c)=\lim_{k\rightarrow+\infty,t\rightarrow0}\mathrm{dist}^c(\mathbf y_k)=0.
\end{align}
And by the definition of $q_0^c$, $\mathbf y\notin\mathrm{supp}\;q_0^c$ implies that $(\mathbf y,c)\notin\mathrm{supp}\;q_0$, and $\mathbf x\in\mathrm{supp}\;q_0^c$ implies that $(\mathbf x,c)\in\mathrm{supp}\;q_0$.
\end{proof}

\section{Detailed implementation of \method}

\subsection{Empirical Value of Hyper-parameters of \method}\label{subsec:b.1}

\begin{table*}[!ht]
\caption{
    \textbf{Empirical value} of hyper-parameters for \method used in our experiments.
}
\label{tab:hyperparameters}
\vskip 0.15in
\centering
\footnotesize
\SetTblrInner{rowsep=1.0pt}      
\SetTblrInner{colsep=20.0pt}     
\begin{tblr}{
    cell{1-6}{2-5}={halign=c,valign=m},         
    cell{1-6}{1}={halign=c,valign=m},           
    hline{1,7}={1-5}{1.0pt},                    
    hline{2}={1-5}{},                           
}
Dataset           &      CIFAR10 &   ImageNet 64 &  ImageNet 128 &  LSUN Bedroom \\
Setting           &  Conditional &   Conditional &   Conditional & Unconditional \\
Dataset Scale     & $50K$ Images & $1.3M$ Images & $1.3M$ Images &   $3M$ Images \\
$\lambda_{score}$ &         0.01 &           0.1 &           0.1 &           0.1 \\
$t$               &    $[40,60]$ &    $[25,35]$  &    $[25,35]$  &     $[25,35]$ \\
Frequency         &            8 &             8 &             8 &             8 \\
\end{tblr}
\vspace{-10pt}
\end{table*}

First, we would like to summarize the guideline if choosing the adequate hyper-parameters as below:

\begin{itemize}
\item It is recommended to choose $\lambda_{score}=0.01$ for small-scale dataset (\textit{e.g.}, CIFAR10) and $\lambda_{score}=0.1$ for large-scale dataset (\textit{e.g.}, ImageNet);
\item Narrowed timestep interval is suggested to set to near 50 for commonly used diffusion models with total timesteps $T=1000$;
\item Regularization frequency is suggested to set as 8.
\end{itemize}

Second, empirical value of hyper-parameters used in our experiments are listed in \cref{tab:hyperparameters}.
We hope these values can help users to efficiently find a combination for a new dataset.

\subsection{Training Cost on Aurora}

\begin{table*}[t]
\caption{
    \textbf{Comparison} of computational cost on Aurora~\cite{zhu2023aurora} on ImageNet 64x64~\cite{dengjia2009}.
    We involve the trick to omit U-Net Jacobian in DreamFusion~\cite{poole2023dreamfusion} for better training efficiency.
    We report average training time for one iteration, maximal GPU memory, and number of GPUs, respectively.
}
\label{tab:time_cost}
\vskip 0.15in
\centering
\footnotesize
\SetTblrInner{rowsep=1.0pt}      
\SetTblrInner{colsep=8.0pt}      
\begin{tblr}{
    cell{1-5}{2-5}={halign=c,valign=m},         
    cell{1-5}{1}={halign=l,valign=m},           
    hline{1,6}={1-5}{1.0pt},                    
    hline{2}={1-5}{},                           
}
Method                                     & Batch Size & Average Iteration Time & Max GPU Memory & \# GPUs \\
Aurora~\cite{zhu2023aurora}                &       1024 &                   3.3s &       36.10 GB &      16 \\
Aurora~\cite{zhu2023aurora}                &       1024 &                   1.8s &       36.10 GB &      32 \\
Aurora + \method                           &       1024 &                   2.9s &       56.00 GB &      32 \\
Aurora + \method (omitting U-Net Jacobian) &       1024 &                   2.0s &       37.22 GB &      32 \\
\end{tblr}
\vspace{-10pt}
\end{table*}

We further report in \cref{tab:time_cost} the average training time for one iteration, and number of used GPUs, respectively.
When keeping the same batch size and the same number of GPUs, our \method sightly increases the training cost.
Note that we double the number of GPUs to avoid memory explosion at the training steps when we apply the proposed \method.
The batch size is kept the same as the baseline.

\section{Additional Samples from \method}

In this section, we provide additional samples from \method, including diverse synthesis (\textit{i.e.}, \cref{fig:bedroom_supp,fig:imagenet_biggan_supp,fig:imagenet_supp}) and latent interpolation (\textit{i.e.}, \cref{fig:interp_bedroom_supp,fig:interp_imagenet_supp}).
All samples are synthesized with \method upon StyleGAN2~\cite{Karras2019AnalyzingAI} on LSUN Bedroom 256x256~\cite{yu15lsun}, BigGAN~\cite{Brock2018LargeSG} on ImageNet 128x128~\cite{dengjia2009} and Aurora~\cite{zhu2023aurora} on ImageNet 64x64~\cite{dengjia2009}, respectively.

\begin{figure*}[t]
\centering
\includegraphics[width=1.0\textwidth]{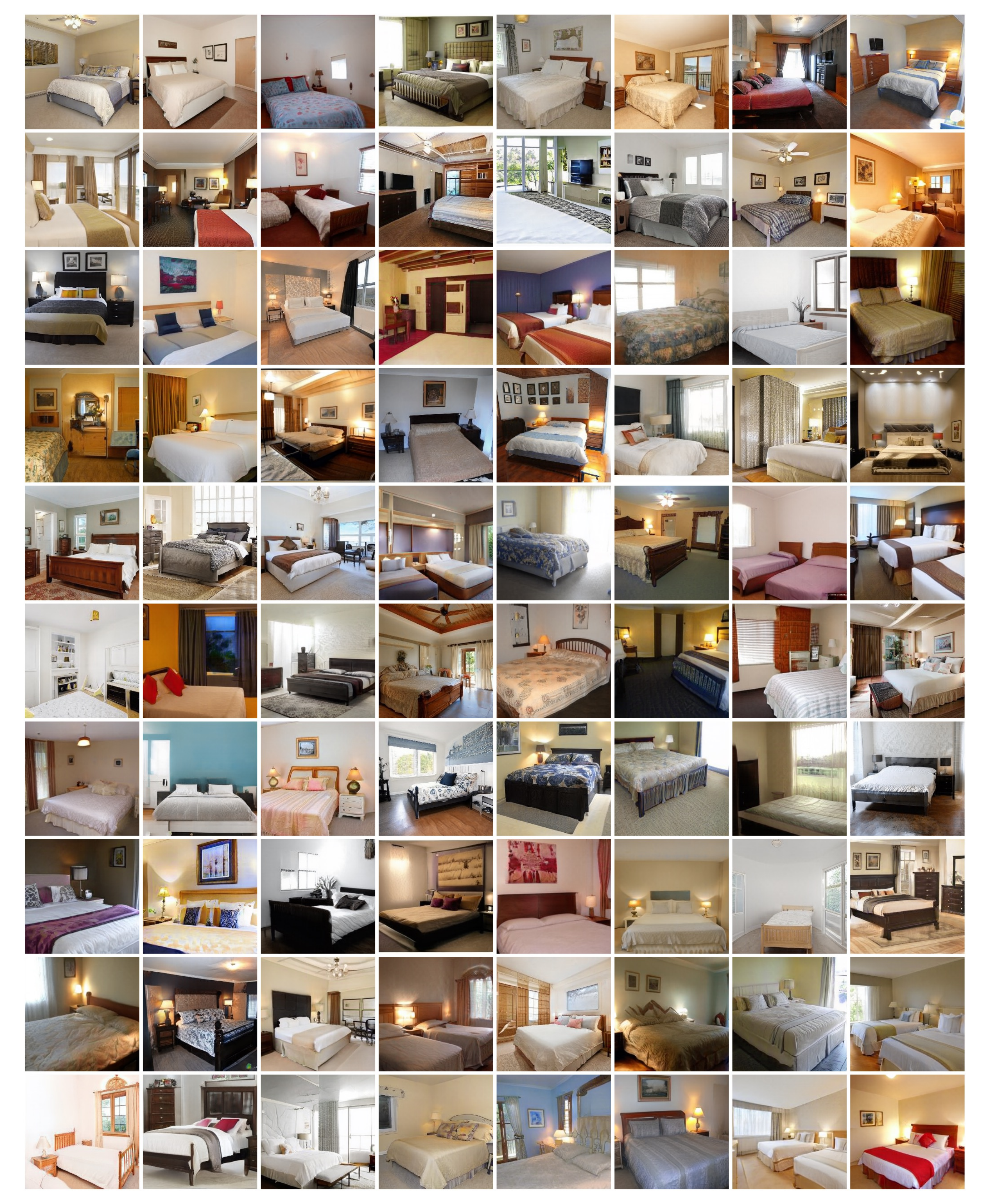}
\vspace{-18pt}
\caption{
    \textbf{Diverse results} generated by \method upon StyleGAN2~\cite{Karras2019AnalyzingAI} trained on LSUN Bedroom 256x256 dataset~\cite{yu15lsun}.
    We randomly sample the global latent code $\mathbf z$ for each image.
}
\label{fig:bedroom_supp}
\end{figure*}

\begin{figure*}[t]
\centering
\includegraphics[width=1.0\textwidth]{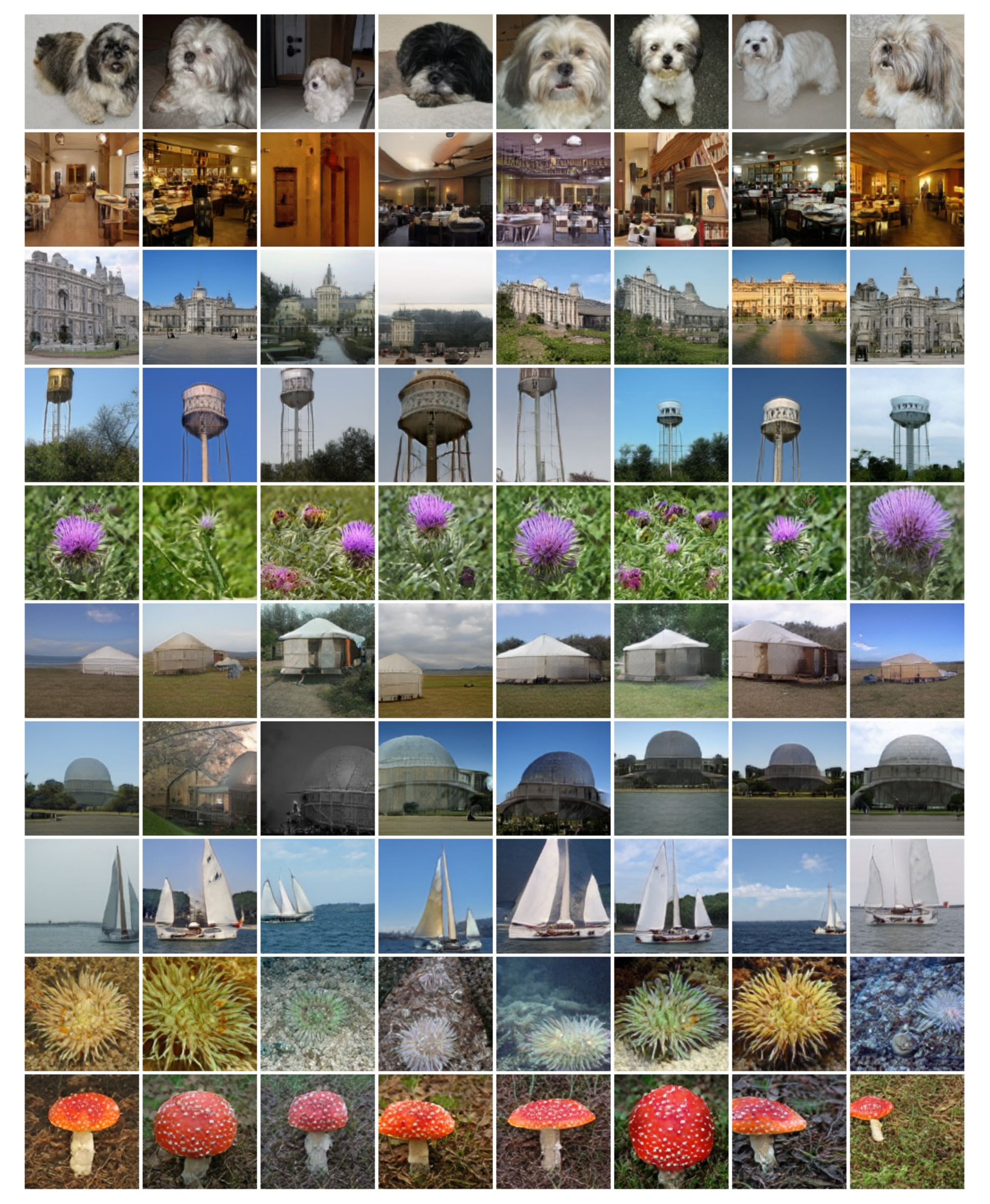}
\vspace{-18pt}
\caption{
    \textbf{Diverse results} generated by \method upon BigGAN~\cite{Brock2018LargeSG} trained on ImageNet 128x128 dataset~\cite{dengjia2009}
    We randomly sample eight global latent codes $\mathbf z$ for each label condition $c$, demonstrated in each row.
}
\label{fig:imagenet_biggan_supp}
\end{figure*}

\begin{figure*}[t]
\centering
\includegraphics[width=1.0\textwidth]{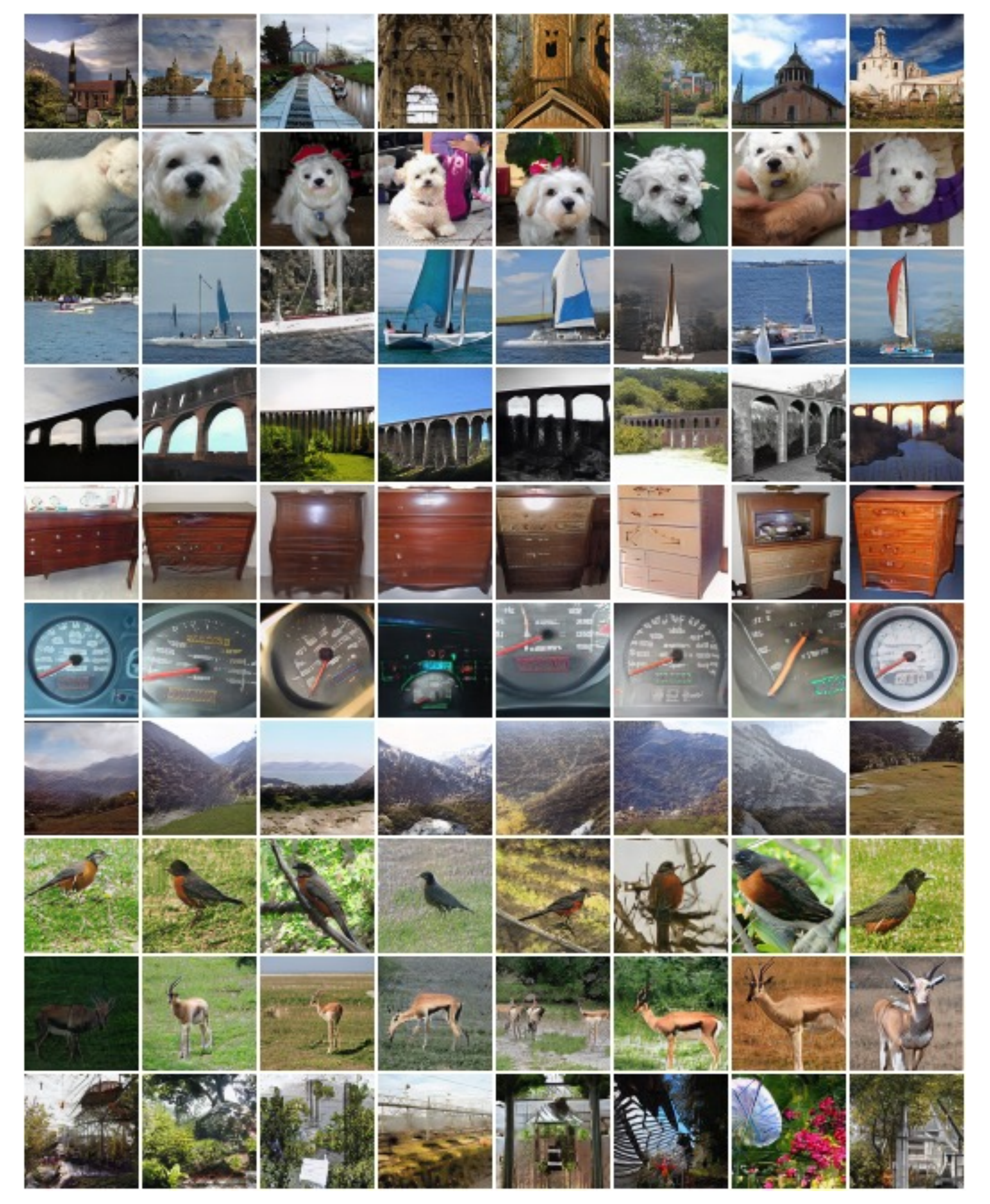}
\vspace{-18pt}
\caption{
    \textbf{Diverse results} generated by \method upon Aurora~\cite{zhu2023aurora} trained on ImageNet 64x64 dataset~\cite{dengjia2009}
    We randomly sample eight global latent codes $\mathbf z$ for each label condition $c$, demonstrated in each row.
}
\label{fig:imagenet_supp}
\end{figure*}

\begin{figure*}[t]
\centering
\includegraphics[width=1.0\textwidth]{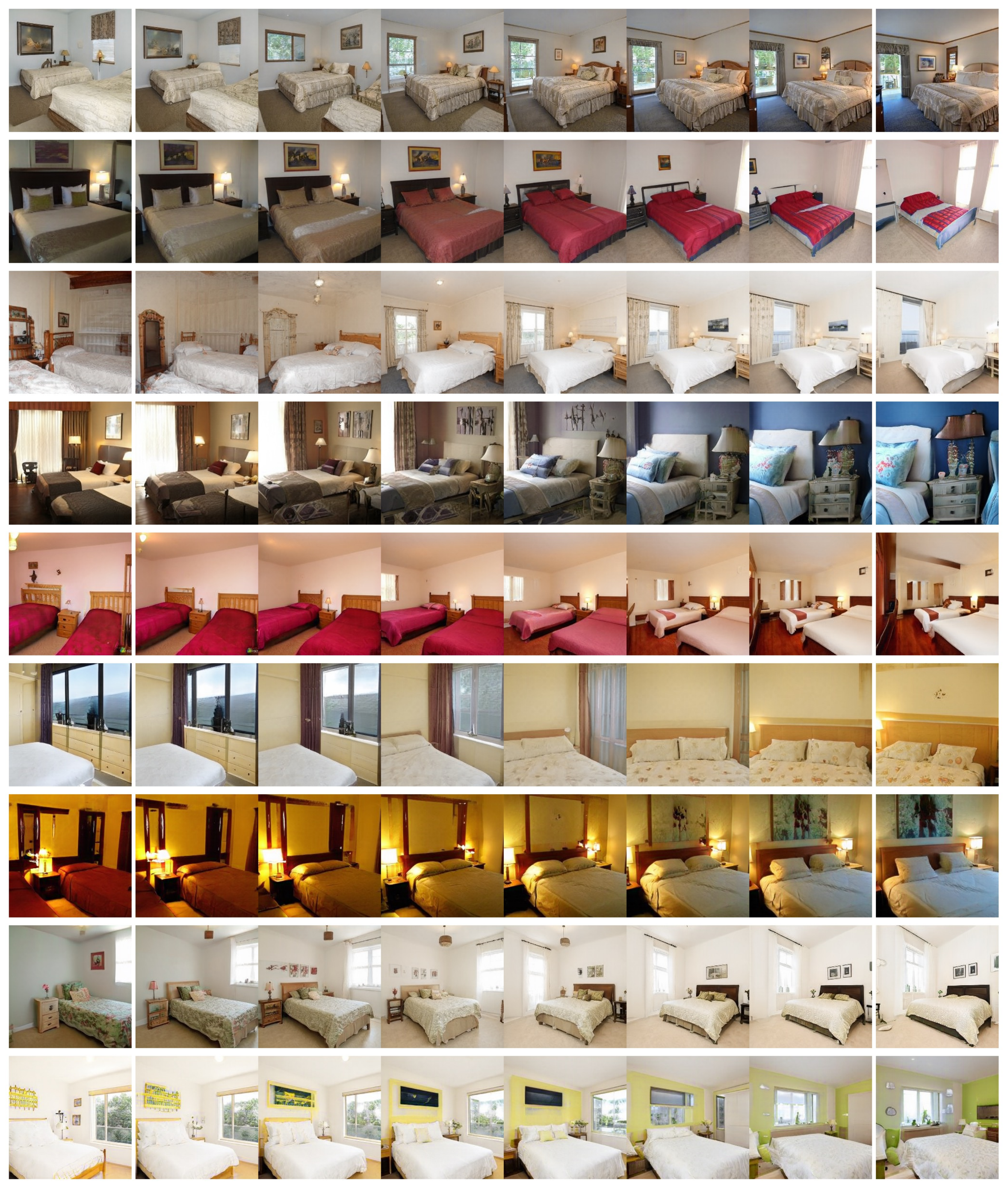}
\vspace{-18pt}
\caption{
    \textbf{Interpolation} between leftmost and rightmost images with linear interpolation.
    We apply \method upon StyleGAN2~\cite{Karras2019AnalyzingAI} on LSUN Bedroom 256x256 dataset~\cite{yu15lsun}, interpolating in the disentangled latent space.
}
\label{fig:interp_bedroom_supp}
\end{figure*}

\begin{figure*}[t]
\centering
\includegraphics[width=1.0\textwidth]{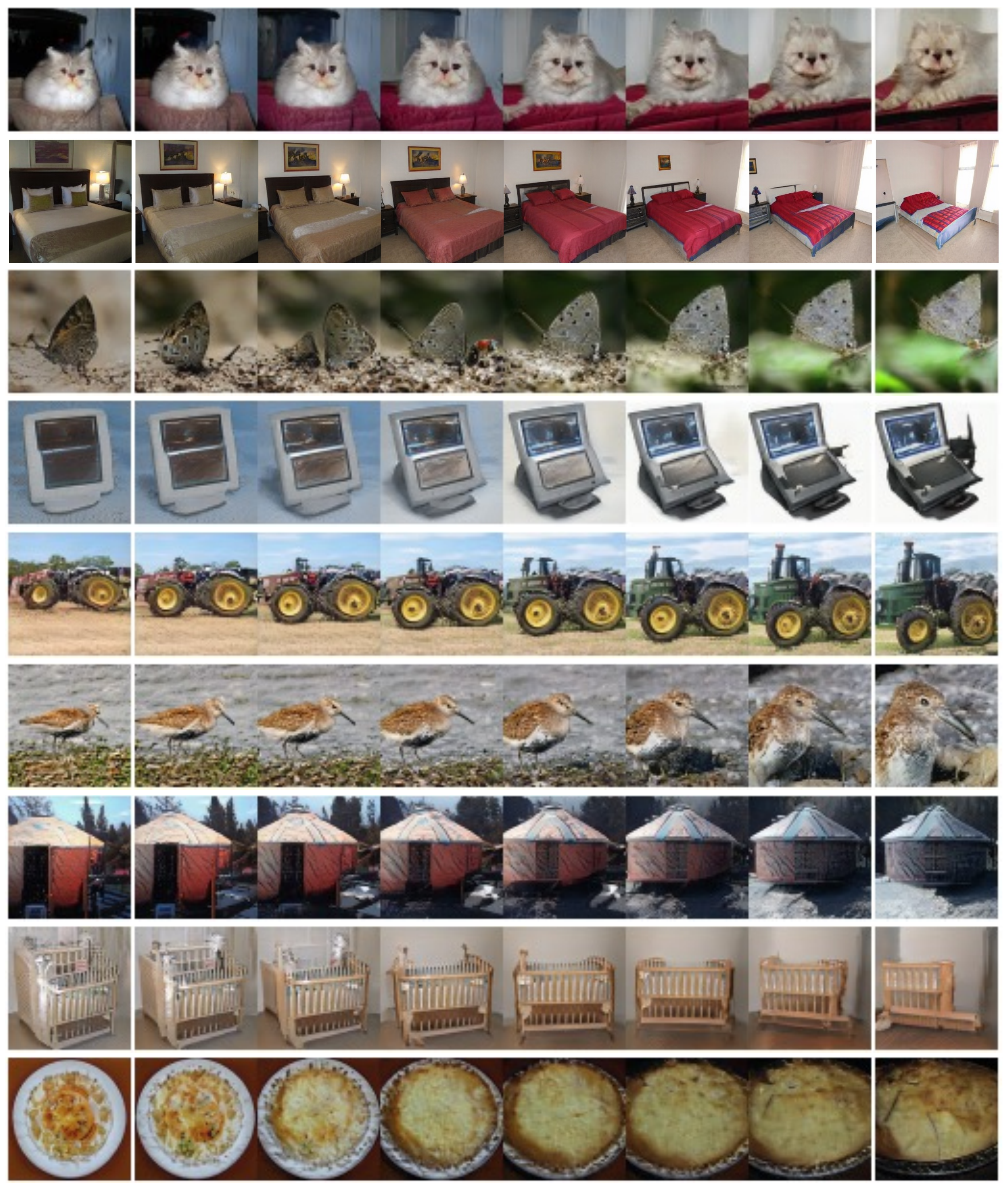}
\vspace{-18pt}
\caption{
    \textbf{Interpolation} between leftmost and rightmost images with linear interpolation.
    We apply \method upon Aurora~\cite{zhu2023aurora} on ImageNet 64x64 dataset~\cite{dengjia2009}, interpolating in the disentangled latent space by fixing the label condition.
}
\label{fig:interp_imagenet_supp}
\end{figure*}


\end{document}